\newtheorem{lemma}{Lemma}
\newtheorem{theorem}{Theorem}
\newtheorem{corollary}{Corollary}
\newtheorem{remark}{Remark}
\newtheorem{assumption}{Assumption}
\newcommand{\R}{\mathbb{R}}
\newcommand{\E}{\mathbb{E}}
\newcommand{\F}{\mathcal{F}}
\newcommand{\g}{\gamma}
\newcommand{\V}{\text{val}}
\newcommand\copyrighttext{%
  \footnotesize 0018-9286 \textcopyright 2021 IEEE. Personal use is permitted, but republication/redistribution requires IEEE permission. A version of this paper is accepted for publication at the IEEE Transactions on Automatic Control.
  DOI: 10.1109/TAC.2022.3159453}
\newcommand\copyrightnotice{%
\begin{tikzpicture}[remember picture,overlay]
\node[anchor=south,yshift=10pt] at (current page.south) {\fbox{\parbox{\dimexpr\textwidth-\fboxsep-\fboxrule\relax}{\copyrighttext}}};
\end{tikzpicture}%
}
\title{
A Generalized Minimax Q-learning Algorithm for Two-Player Zero-Sum Stochastic Games
}
\author{Raghuram Bharadwaj Diddigi, Chandramouli Kamanchi, Shalabh Bhatnagar
\thanks{The authors are with the Department of Computer
Science and Automation, Indian Institute of Science, Bangalore, 560012,
India (e-mails: raghub@iisc.ac.in; chandramouli@iisc.ac.in; shalabh@iisc.ac.in).}%
}
\begin{document}

\maketitle
\thispagestyle{empty}
\pagestyle{empty}
\copyrightnotice
\begin{abstract}
We consider the problem of two-player zero-sum games. This problem is formulated as a min-max Markov game in the literature. The solution of this game, which is the min-max payoff, starting from a given state is called the min-max value of the state. In this work, we compute the solution of the two-player zero-sum game utilizing the technique of successive relaxation that has been successfully applied in the literature to compute a faster value iteration algorithm in the context of Markov Decision Processes. We extend the concept of successive relaxation to the setting of two-player zero-sum games. We show that, under a special structure on the game, this technique facilitates faster computation of the min-max value of the states. We then derive a generalized minimax Q-learning algorithm that computes the optimal policy when the model information is not known. Finally, we prove the convergence of the proposed generalized minimax Q-learning algorithm utilizing stochastic approximation techniques, \textcolor{black}{under an assumption on the boundedness of iterates}. Through experiments, we demonstrate the effectiveness of our proposed algorithm. 
\end{abstract}

\section{Introduction}
In two-player zero-sum games, there are two agents that are competing against each other in a common environment. Based on the actions taken by the agents, they receive a payoff corresponding to the current state and the environment transitions to the next state. The objective of an agent (say agent 1) is to compute a sequence of actions starting from a given state to maximize the total discounted payoff. On the other hand, the objective of the second agent (agent 2) is to compute a sequence of actions that minimizes the total discounted payoff. This problem is formulated as a Markov game and the value that is obtained as the min-max of the total expected discounted payoff starting from state $i$ is called the min-max value of state $i$. The policies that achieve this min-max value are the optimal policies of the agents. 

When the model information of the environment is known, a Bellman operator for the two-player zero-sum game \cite{bertsekas1995dynamic} is constructed and a fixed point iteration scheme analogous to the value iteration is used to compute the min-max value. However, in most two-player zero-sum game settings, the model information is assumed unknown to the players and the objective is to compute optimal policies utilizing the state and payoff samples obtained from the environment. 

In our work, we construct a modified min-max Q-Bellman operator by using the technique of successive relaxation for the Markov games and prove that the contraction factor is at most the contraction factor of the standard min-max Bellman operator. This implies that, when the model information is known, the min-max value can be computed faster using our proposed scheme. We then proceed to develop a generalized minimax Q-learning algorithm based on the modified min-max Q-Bellman operator. 

The minimax Q-learning algorithm has been presented in \cite{littman1994markov}. 
Two-player general sum games are those where the payoffs of the agents are unrelated in general. If the payoff of an agent is the negative of the payoff of another agent, the game reduces to a zero-sum game. A Nash Q-learning algorithm for solving general sum games is proposed in \cite{hu2003nash}. In \cite{littman2001friend}, Friend-or-Foe (FF) Q-learning for general sum games is proposed and is shown to have stronger convergence properties compared to Nash Q-learning. A generalization of Nash Q-learning and FF Q-learning, namely correlated Q-learning, is discussed in \cite{greenwald2003correlated}. In \cite{bowling2001rational}, desirable properties for an agent learning in multi-agent scenarios are studied and a new learning algorithm namely ``WoLF policy hill climbing" is proposed. \textcolor{black}{Surveys of algorithms for multi-agent learning and multi-agent Reinforcement learning are provided in \cite{bu2008comprehensive,zhang2019multi}}. 

We now discuss some variants of minimax Q-learning in the literature. In \cite{dahl2000minimax}, the minimax TD-learning algorithm that utilizes the concept of temporal difference learning is proposed. The minimax version of the Deep Deterministic policy gradient algorithm has been recently developed in \cite{li2019robust}. However, no convergence proofs or theoretical guarantees are provided.

The concept of successive relaxation in the context of Markov Decision Processes (MDPs) has been first applied in \cite{reetz1973solution}. In our recent work \cite{8731666}, we have proposed successive over-relaxation Q-learning for model-free MDPs (i.e., in the single-agent scenario). In this work, we extend the concept of successive relaxation to the two-player zero-sum games and propose a provably convergent generalized minimax Q-learning. The contributions of the paper are as follows:
\begin{itemize}
    \item We present a modified min-max Q-Bellman operator for two-player zero-sum Markov games and show that the operator is a max-norm contraction.
    \item We show that under some assumptions, the contraction factor of the modified min-max Q-Bellman operator is smaller than the standard min-max Q-Bellman operator.
    \item \textcolor{black}{We propose a model-free generalized minimax Q-learning algorithm and prove its almost sure convergence using ODE based analysis of stochastic approximation, \textcolor{black}{under an assumption on the boundedness of iterates}}
    \item  \textcolor{black}{We discuss an interesting relation between standard minimax Q-learning and our proposed algorithm.}
    \item Finally, through experimental evaluation, we show that our proposed algorithm has a better performance compared to the standard minimax Q-learning algorithm. 
\end{itemize}
\textcolor{black}{We note here that the Successive Over Relaxation (SOR) technique utilized to derive our algorithm and stochastic approximation arguments employed in the convergence analysis are well-known in the literature. Our contribution comprises of applying these techniques to derive and analyze a generalized minimax Q-Learning algorithm that has faster convergence.}

\section{Background and Preliminaries}
In this paper, we consider the setting of two-player zero-sum Markov games \cite{filar2012competitive}. The two players in the game are referred to as agent 1 and agent 2. A two-player zero-sum Markov game is characterized by the tuple $(S,U,V, p,r,\alpha)$ where $S$ is the set of states that both the agents observe, $U$ is the finite set of actions of agent 1, $V$ is the finite set of actions of agent 2, $p$ denotes the transition probability rule, i.e., $p(j|i,u,v)$ denotes the probability of transition to state $j \in S$ from state $i \in S$ when actions $u \in U$ and $v \in V$ are chosen by the agents 1 and 2, respectively. Let $r(i,u,v)$ denote the single-stage payoff obtained by the agent 1 in state $i$ when actions $u$ and $v$ are chosen by agents 1 and 2, respectively. Note that, in the case of a zero-sum Markov game, the payoff of the agent 2 is the negative of the payoff obtained by the agent 1. Also, $0 \leq \alpha < 1$ denotes the discount factor. The goals of the two agents in the Markov game are to individually learn the optimal policies $\pi_1: S \xrightarrow{} \Delta^{|U|} $ and $\pi_2:S \xrightarrow{} \Delta^{|V|}$, respectively, where $\Delta^{d}$ denotes the probability simplex in $\R^d$ and $\pi_1(i)$ (resp. $\pi_2(i)$) indicates the probability distribution over actions to be taken by the agent 1 (resp. agent 2) in state $i$ that maximizes (resp. minimizes) the discounted objective given by:
\begin{align}\label{two-cost}
    \displaystyle \min_{\pi_2} \max_{\pi_1} \sum_{t = 0}^{\infty} \E \Big[\sum_{u = 1}^{|U|}\sum_{v=1}^{|V|} \alpha^{t} x^{u}_{t}y^{v}_{t} r(s_t,u,v) \mid s_{0} = i \Big],
\end{align}
where $s_{t}$ is the state of the game at time $t$, $\pi_{1}(s_t)=(x^{u}_{t})^{|U|}_{u=1} \in \Delta^{|U|}, \pi_{2}(s_t)=(y^{v}_{t})^{|V|}_{v=1}  \in \Delta^{|V|}, ~ \forall t \geq 0 $ and $\E[.]$ is the expectation taken over the states obtained over time $t = 1,\ldots,\infty$. Let \textcolor{black}{$J^*(i)$} denote the min-max value in state $i$ obtained by solving \eqref{two-cost}. It can be shown (\textcolor{black}{ \cite[Chapter~7]{bertsekas1996neuro}}) that the min-max value function, \textcolor{black}{$J^*$}, satisfies \textcolor{black}{the following fixed point equation in $J \in \R^{|S|}$,} given by: 
\begin{align}\label{v-eq}
    J(i)= \V[Q(i)], ~ \forall i\in S, 
\end{align}
where $Q(i)$ is a matrix of size $|U| \times |V|$, whose $(u,v)^{th}$ entry is given by $Q(i,u,v) = r(i,u,v) + \alpha \displaystyle\sum_{j \in S} p(j|i,u,v) J(j)$ and the function $\V[A]$, for a given $m \times n$ matrix $A$, is defined as follows:
\begin{align}\label{v-def}
    \V[A] = \displaystyle \min_{y} \max_{x} x^TAy,
\end{align}
where $x \in \Delta^{m}$ and $y \in \Delta^{n}$, respectively. 
The system of equations in \eqref{v-eq} can be rewritten as:
\begin{align}\label{fp-vi}
    J = TJ,
\end{align}
where the operator $T$, \textcolor{black}{for a given $J \in \R^{|S|}$,} is defined as: 
\begin{align}
    (TJ)(i) = \V[Q(i)], ~ \forall i \in S.
\end{align}
The operator $T$ and the set of equations \eqref{v-eq} are analogous to the Bellman operator and the Bellman optimality condition, respectively, for Markov Decision Processes (MDPs) \cite{bertsekas1996neuro}.

\section{The Proposed Algorithm}\label{PA}
We describe a single iteration of the synchronous version \cite{borkar2009stochastic} of our proposed algorithm in Algorithm \ref{alg:minimax Q-learning} below. At each iteration $n$, Q-values of all the state-action tuple $Q(i,u,v)$ are updated as shown in the step 4 of Algorithm \ref{alg:minimax Q-learning}.


\begin{algorithm}[ht]
\caption{Generalized minimax Q-Learning}\label{alg:minimax Q-learning}
\textcolor{black}{
\hspace*{\algorithmicindent}\textbf{Input:}\\ 
\hspace*{\algorithmicindent}\textbf{$w$}: Choose $w \in (0,w^*]$ (with $w^*$ as in \eqref{w-star}), as the relaxation parameter. \\
\hspace*{\algorithmicindent}\textbf{$\{Y_n\}$}: a sequence of vectors of size $|S\times U \times V|$, where $Y_n(i,u,v)$ indicates the next state obtained when actions $u,v$ are taken in state $i$. \\
\hspace*{\algorithmicindent}\textbf{$r(i,u,v)$}: single-stage payoff available when actions $u,v$ are chosen in state $i$. \\
\hspace*{\algorithmicindent}\textbf{$\g_{n}$}: step-size chosen at time $n$.\\
\hspace*{\algorithmicindent}\textbf{$Q_{n}$}: the estimate of $Q^{\dagger}$ (see \eqref{MQU}) at time $n$.
\begin{algorithmic}[1]
\Procedure{Generalized minimax Q-Learning:}{}
\For{$(i,u,v) \in S \times U \times V $}
\State $d_{n+1}(i,u,v) = w\Big(r(i,u,v)+\alpha\V [Q_n(Y_{n}(i,u,v))]\Big)+(1-w)\V[Q_n(i)]$
\State $Q_{n+1}(i,u,v)= \big{(}1 - \g_{n}\big{)}Q_{n}(i,u,v) + \g_{n}d_{n+1}(i,u,v)$
\EndFor
\State \textbf{return} $Q_{n+1}$ 
\EndProcedure
\end{algorithmic}
}
\end{algorithm}

\begin{remark}
Note that the step 3 of Algorithm \ref{alg:minimax Q-learning} requires computation of $\V[Q_{n}(.)]$ which is a linear program. Also observe that the generalized minimax Q-learning algorithm only requires an additional computation of $\V[Q_n(i_n)]$ compared to the standard minimax Q-learning.
\end{remark}
\begin{remark}
Let $Q_{N}$ be the solution obtained by Algorithm \ref{alg:minimax Q-learning} upon termination after $N$ iterations. Then the approximate min-max value, $\tilde{J}(i)$ for a given state $i$ is obtained as follows:
\begin{align*}
    \tilde{J}(i) = \V[Q_{N}(i)]
\end{align*}
and the corresponding approximate policies of the agents are obtained as:
\begin{align*}
    (\tilde{\pi}_1(i),\tilde{\pi}_2(i)) \in \arg \V[Q_{N}(i)].
\end{align*}
\end{remark}

\section{Convergence Analysis}
Let $\Delta^{d}:=\{x \in \R^{d}: x_{i}\geq 0, \sum^{d}_{i=1}x_{i}=1\}$ denote the probability simplex in $\R^{d}$. For matrix $A \in \R^{m \times n}, x \in \Delta^{m}$ and $y \in \Delta^{n}$, recall that the value of the matrix $A$ is defined as $\V[A]:= \displaystyle\min_{y \in \Delta^{n}}\displaystyle\max_{x \in \Delta^{m}} x^TAy$. Note that the norm considered in this section is the max-norm, i.e., norm of the vector $x \in \R^{d}$ is $\|x\|:=\displaystyle\max_{1\leq i\leq d}|x(i)|$. We first derive a few properties of the $\V[.]$ operator that would be used in the subsequent analysis.
\begin{lemma}
\label{l1}
Suppose $B=[b_{ij}],C=[c_{ij}] \in \R^{m \times n}$, then
\begin{align*}
|\V[B]-\V[C]|\leq \displaystyle\max_{i,j}|b_{ij}-c_{ij}|=\|B-C\|
\end{align*}
\end{lemma}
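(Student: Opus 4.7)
The plan is to show that the scalar functional $\V[\cdot]$ is $1$-Lipschitz with respect to the entry-wise max-norm, by exploiting the fact that $x$ and $y$ are probability vectors and hence $x^{T}(\mathbf{1}\mathbf{1}^{T})y = 1$ for all admissible $x,y$. This identity lets me convert an entry-wise bound on $B-C$ into a uniform additive bound on the bilinear form $x^{T}By$.

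Concretely, I would let $\varepsilon := \max_{i,j}|b_{ij}-c_{ij}| = \|B-C\|$. Then, entry-wise, $b_{ij} \leq c_{ij} + \varepsilon$, which (writing $\mathbf{1}$ for the all-ones vector of the appropriate dimension) says $B \leq C + \varepsilon\,\mathbf{1}\mathbf{1}^{T}$ and, symmetrically, $C \leq B + \varepsilon\,\mathbf{1}\mathbf{1}^{T}$. For every $x \in \Delta^{m}$ and $y \in \Delta^{n}$ this gives
\begin{align*}
x^{T}By \;\leq\; x^{T}Cy + \varepsilon\,(x^{T}\mathbf{1})(\mathbf{1}^{T}y) \;=\; x^{T}Cy + \varepsilon,
\end{align*}
since $x$ and $y$ are probability vectors.

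Next I would propagate this inequality through the min-max. Fixing $y \in \Delta^{n}$, taking $\max_{x \in \Delta^{m}}$ on both sides gives $\max_{x} x^{T}By \leq \max_{x} x^{T}Cy + \varepsilon$. Then taking $\min_{y \in \Delta^{n}}$ and using that adding a constant commutes with the min yields $\V[B] \leq \V[C] + \varepsilon$. The symmetric argument, starting from $C \leq B + \varepsilon\,\mathbf{1}\mathbf{1}^{T}$, gives $\V[C] \leq \V[B] + \varepsilon$. Combining these two inequalities produces $|\V[B]-\V[C]| \leq \varepsilon = \|B-C\|$, which is exactly the claim.

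There is no real obstacle here; the only point that needs a little care is the order in which the $\max$ and $\min$ are taken when propagating the pointwise bound, and the observation that for $x \in \Delta^{m}$, $y \in \Delta^{n}$ one has $x^{T}\mathbf{1}\mathbf{1}^{T}y = 1$, which is what makes the Lipschitz constant exactly $1$ rather than something larger depending on the dimensions.
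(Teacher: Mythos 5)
Your proof is correct and rests on the same key fact as the paper's: that $x^{T}\mathbf{1}\mathbf{1}^{T}y=1$ for $x\in\Delta^{m}$, $y\in\Delta^{n}$, which is what makes $\V[\cdot]$ nonexpansive in the entrywise max-norm. The only difference is presentational — you propagate the pointwise sandwich $B\leq C+\varepsilon\,\mathbf{1}\mathbf{1}^{T}$ through the min-max using monotonicity and translation-invariance (the latter being the paper's Lemma~\ref{l3} with $\beta=1$, $k=\varepsilon$), whereas the paper manipulates the difference $\V[B]-\V[C]$ directly via $\sup(a_{n}+b_{n})\leq\sup a_{n}+\sup b_{n}$ and then repeats the argument for the reverse inequality; your version makes the symmetry in $B$ and $C$ manifest and is, if anything, slightly tidier.
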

\begin{proof}
\begin{align*}
\V[B]-\V[C]=& \min_{y}\max_{x} x^TBy- \min_{y}\max_{x}x^TCy \\
\leq &-\min_{y}\bigg{\{}\max_{x}x^TCy-\max_{x} x^TBy\bigg{\}}\\
= & \max_{y}\bigg{\{}\max_{x} x^TBy-\max_{x}x^TCy\bigg{\}} \\
\leq & \max_{y}\max_{x} x^T(B-C)y \\
\leq & \max_{y}\max_{x}\bigg{|}\sum_{1\leq i,j \leq n} x_{i}(b_{ij}-c_{ij})y_{j}\bigg{|} \\
  \leq & \displaystyle\max_{i,j}|b_{ij}-c_{ij}|\max_{y}\max_{x}\bigg{|}\sum_{1\leq i,j \leq n} x_{i}y_{j}\bigg{|}\\
  =&\displaystyle\max_{i,j}|b_{ij}-c_{ij}|.
\end{align*}
In the above, $x \in \Delta^m$, and $y \in \Delta^n$, respectively. Similarly,
\begin{align*}
\V[B]-\V[C] = & \min_{y}\max_{x} x^TBy- \min_{y}\max_{x}x^TCy \\
\geq & \min_{y}\bigg{\{}\max_{x}x^TBy-\max_{x} x^TCy\bigg{\}}\\
\geq & \min_{y}\bigg{\{}-\max_{x} x^T(C-B)y\bigg{\}} 
\end{align*}
\begin{align*}
= & -\max_{y}\max_{x} x^T(C-B)y \\
\geq & -\max_{y}\max_{x}\bigg{|}\sum_{1\leq i,j \leq n} x_{i}(b_{ij}-c_{ij})y_{j}\bigg{|} \\
\geq & -\displaystyle\max_{i,j}|b_{ij}-c_{ij}|\max_{y}\max_{x}\bigg{|}\sum_{1\leq i,j \leq n} x_{i}y_{j}\bigg{|}\\
=&-\displaystyle\max_{i,j}|b_{ij}-c_{ij}|.
\end{align*}
Therefore $|\V[B]-\V[C]|\leq \max_{i,j}|b_{ij}-c_{ij}|=\|B-C\|.$
Note the repeated application of the facts 
$\displaystyle\sum_{i,j} x_{i}y_{j}=1$, $\sup{(a_{n}+b_{n})}\leq \sup{a_n}+\sup{b_n}$ in the arguments. This completes the proof.
\end{proof}
\begin{corollary}\label{l2}
Consider $B = [b_{ij}] \in \R^{m \times n}$, then 
\begin{align*}
|\V[B]|\leq \displaystyle \|B\|.
\end{align*}
\end{corollary}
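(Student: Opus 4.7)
The plan is to obtain this as an immediate specialization of Lemma \ref{l1} by choosing $C$ to be the $m \times n$ zero matrix. The first step is to verify that $\V[0] = 0$: for $C = 0$ we have $x^T C y = 0$ for every $x \in \Delta^m$ and $y \in \Delta^n$, so both the inner maximum and the outer minimum in the definition $\V[C] = \min_y \max_x x^T C y$ evaluate to $0$.

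Next, I would apply Lemma \ref{l1} with this choice of $C$. The lemma yields
\begin{align*}
|\V[B] - \V[C]| \leq \|B - C\|,
\end{align*}
which, combined with $\V[C] = 0$ and $\|B - C\| = \|B\|$, gives exactly $|\V[B]| \leq \|B\|$.

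There is no serious obstacle here; the corollary is essentially a one-line consequence once the observation $\V[0] = 0$ is recorded. The only thing worth being careful about is that the max-norm used in the statement is the entrywise max-norm $\|B\| = \max_{i,j} |b_{ij}|$ established in the preceding lemma, so no norm conversion is needed and the bound is tight up to this convention.
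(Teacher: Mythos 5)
Your proposal is correct and follows exactly the paper's own proof: apply Lemma \ref{l1} with $C=0$ and use $\V[0]=0$. The only difference is that you explicitly verify $\V[0]=0$, which the paper leaves implicit.
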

\begin{proof}
Using Lemma \ref{l1} with $C = 0$, we get:
\begin{align*}
    |\V[B]| &\leq \displaystyle\max_{i,j}|b_{ij}|
    =\displaystyle \|B\|. \qedhere
\end{align*}
\end{proof}

\begin{lemma}
\label{l3}
Let $E=[e_{ij}]_{m \times n}$, where $e_{ij}=1 ~ \forall i,j.$ Then for constants $\beta$,$k$ $\in$ $\R$ and $A \in \R^{m \times n},$
\begin{align*}
    \V[\beta A+kE]=\beta \V[A]+k. 
\end{align*}
\end{lemma}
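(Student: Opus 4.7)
The plan is to reduce the computation to a direct manipulation of the min-max expression, exploiting the trivial observation that $x^T E y = 1$ whenever $x \in \Delta^m$ and $y \in \Delta^n$. Concretely, I would first note that for any such $x,y$,
\begin{align*}
x^T(\beta A + kE) y = \beta\, x^T A y + k \sum_{i,j} x_i y_j = \beta\, x^T A y + k,
\end{align*}
since $\sum_i x_i = \sum_j y_j = 1$.

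Next I would push this identity through the min-max. Since additive constants commute with both $\min$ and $\max$, and (assuming $\beta \geq 0$, which is the regime of interest because the relaxation parameter $w$ and the discount factor $\alpha$ are nonnegative in the later application) positive scalars pass through $\max_x$ and $\min_y$ as well, I would chain the equalities
\begin{align*}
\V[\beta A + kE]
&= \min_{y}\max_{x}\bigl(\beta\, x^T A y + k\bigr) \\
&= \min_{y}\Bigl(\beta \max_{x} x^T A y + k\Bigr) \\
&= \beta \min_{y}\max_{x} x^T A y + k \\
&= \beta\, \V[A] + k.
\end{align*}
This gives the desired identity in a few lines.

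There is essentially no hard step here; the whole content is the observation $x^T E y = 1$ on the simplex, after which the equality is just scalar arithmetic inside the saddle-point expression. The only place where one must be mildly careful is the sign of $\beta$: the argument as written treats $\beta$ as nonnegative so that $\max_x \beta\, x^T A y = \beta \max_x x^T A y$. If the statement is meant for arbitrary $\beta \in \R$, one would split into cases and use that for $\beta < 0$, $\max_x (\beta\, x^T A y) = \beta \min_x x^T A y$ together with the minimax equality for the bilinear game $x^T A y$ on the product of simplices (guaranteed by the classical minimax theorem), yielding the same conclusion. In either case the result follows directly without invoking any of the contraction machinery developed elsewhere in the paper.
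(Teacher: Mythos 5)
Your main computation is correct and is exactly the paper's proof: the paper likewise observes that $x^TEy=1$ on the product of simplices and pulls $\beta$ and $k$ through the $\min$-$\max$ in one line. However, your closing aside about $\beta<0$ is mistaken. For $\beta<0$ your manipulation gives $\min_y\max_x(\beta\,x^TAy)=\beta\,\max_y\min_x x^TAy$, and the classical minimax theorem asserts $\min_y\max_x x^TAy=\max_x\min_y x^TAy$ --- it does \emph{not} assert $\max_y\min_x x^TAy=\min_y\max_x x^TAy$, which is the identity you would actually need (that quantity is the value of the game with the roles of maximizer and minimizer swapped, and is different in general). Indeed the lemma as literally stated for arbitrary $\beta\in\R$ is false: take $A=(1\;\;2)\in\R^{1\times 2}$, so $x\in\Delta^1$ is trivial and $\V[A]=\min_{y\in\Delta^2}(y_1+2y_2)=1$, while $\V[-A]=\min_{y}(-(y_1+2y_2))=-2\neq -1=(-1)\V[A]$. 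So your instinct that the sign of $\beta$ is the one delicate point was right, but the correct conclusion is that the identity requires $\beta\geq 0$, not that it extends to all $\beta$. This does not affect the paper, since the lemma is only ever invoked with $\beta=w>0$ (and $k=(1-w)J^*(i)$, which may have either sign harmlessly), and your proof is complete in that regime.
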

\begin{proof} By definition of the $\V$ operator, for $x \in \Delta^m$, $y \in \Delta^n$,
\begin{align*}
\V[\beta A+kE] &=\displaystyle\min_{y}\max_{x} x^T(\beta A+kE)y \\ 
            &=\beta \displaystyle\min_{y}\max_{x} x^TAy+k ~ ( \text{since } x^TEy=1)\\
            &=\beta \V[A]+k. \qedhere
\end{align*}
\end{proof}
Recall that for a given stochastic game $(S,U,V,p,r,\alpha)$ the min-max value function $J^*$ satisfies \cite{shapley1953stochastic} the system of equations, 
\begin{align*}
    J(i)=\V[Q(i)], ~ \forall i \in S,
\end{align*}
where $Q(i)$ is a $|U| \times |V|$ matrix
with $(u,v)^\text{th}$ entry 
\begin{align*}
 Q(i,u,v)= r(i,u,v)+\alpha\displaystyle\sum_{j \in S}p(j|i,u,v)J(j),
\end{align*}
and the system of equations can be reformulated as the fixed point equation, $TJ=J$, with $T$ being a contraction  under the max-norm with contraction factor $\alpha$.\\
We define a quantity $w^*$ as follows:
\begin{align}
    w^* \triangleq \displaystyle\min_{i,u,v}\Bigg{\{}\frac{1}{1-\alpha p(i|i,u,v)}\Bigg{\}}.
    \label{w-star}
\end{align}
As the probabilities $p(i|i,u,v) \geq 0, ~ \forall (i,u,v) $, it is clear that $w^* \geq 1$. For $0 < w \leq w^*$, we now define a modified operator 
$T_{w}:\R^{|S|} \rightarrow \R^{|S|}$  as follows \cite{reetz1973solution}:
$$(T_{w}J)(i)=w ~ (TJ)(i)+(1-w)J(i),$$ where $w$ represents a prescribed relaxation factor. Note that $T_{w}$ is in general not a convex combination of $T$ and the identity operator $I$ since we allow $w \geq 1$ as $w^* \geq 1$ (see above).
Let $J^*$ denote the min-max value of the Markov game. Therefore, $TJ^* = J^*$.
Now, 
\begin{align} \label{j-star-same}
    (T_wJ^*)(i) &= w(TJ^*)(i) + (1-w)J^*(i) \nonumber \\
                &= wJ^*(i) + (1-w)J^*(i) \nonumber \\ 
                &= J^*(i).
\end{align}
Therefore, the min-max value function $J^*$ is also a fixed point of $T_{w}.$

Next, we derive a modified min-max Q-Bellman operator for the two-player zero-sum game. 
Let $Q^{\dagger}(i,u,v)$ be defined as follows:
\begin{align}
Q^{\dagger}(i,u,v):=& w\bigg{(}r(i,u,v)+\alpha \sum^{M}_{j=1}p(j|i,u,v)J^*(j)\bigg{)} \nonumber \\ & +(1-w)J^*(i).\label{MQU}
\end{align}
Now let \begin{align*} Q^{*}(i,u,v)=\bigg{(}r(i,u,v)+\alpha \sum^{M}_{j=1}p(j|i,u,v)J^*(j)\bigg{)}\end{align*}
Let $E=[e_{ij}]_{m \times n}$ with $e_{ij}=1, ~ \forall i,j$. Then,
\begin{align*}
    \V[Q^{\dagger}(i)]&=\V[wQ^{*}(i)+(1-w)J^{*}(i)E] \\
    &=w\V[Q^{*}(i)]+(1-w)J^{*}(i) \text{ (from Lemma \ref{l3})}\\
    &=w(TJ^{*})(i)+(1-w)J^{*}(i)\\
    &=(T_wJ^*)(i)=J^*(i) \text{ (from \eqref{j-star-same})}.
\end{align*}
Hence the equation \eqref{MQU} can be rewritten as follows:
\begin{align}
\label{Q-star}
Q^{\dagger}(i,u,v)=& w\bigg{(}r(i,u,v)+\alpha \sum^{M}_{j=1}p(j|i,u,v)\V[Q^{\dagger}(j)]\bigg{)} \nonumber \\ & +(1-w) \V[Q^{\dagger}(i)].
\end{align}
Let $H_w:\R^{|S\times U \times V|} \rightarrow \R^{|S \times U \times V|}$ be defined as follows. For $Q \in \R^{|S\times U \times V|} $,
\begin{align*}
(H_{w}Q)(i,u,v):=& w\bigg{(}r(i,u,v)+ \alpha \sum^{M}_{j=1}p(j|i,u,v) \V[Q(j)]\bigg{)} \nonumber \\ & +(1-w) \V[Q(i)].
\end{align*}
$H_w$ is the modified Q-Bellman operator for the two-player zero-sum Markov game.  
\begin{lemma}
For $0<w \leq w^*$ with $w^*$ as in \eqref{w-star}, the map $H_w:\R^{|S\times U \times V|} \rightarrow \R^{|S \times U \times V|}$ is a max-norm contraction and $Q^{\dagger}$ is the unique fixed point of $H_w$.
\label{l5}
\end{lemma}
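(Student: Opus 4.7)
The plan is to verify the contraction property directly from the definition of $H_w$ and then invoke the Banach fixed point theorem to obtain uniqueness; existence has already been exhibited since $Q^{\dagger}$ was shown to satisfy \eqref{Q-star}. Fix $Q_1, Q_2 \in \R^{|S\times U \times V|}$ and consider the pointwise difference $(H_w Q_1)(i,u,v) - (H_w Q_2)(i,u,v)$. The reward terms cancel, leaving a linear combination of the differences $\V[Q_1(j)] - \V[Q_2(j)]$, to which Lemma \ref{l1} applies to give $|\V[Q_1(j)] - \V[Q_2(j)]| \le \|Q_1 - Q_2\|$ for every $j$.

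The main subtlety is that $w$ is allowed to exceed $1$, so $(1-w)$ may be negative and one cannot just bound the coefficients by their absolute values. First I would split the sum over $j$ by isolating the self-transition term, writing
\begin{align*}
(H_w Q)(i,u,v) = w\,r(i,u,v) &+ w\alpha \sum_{j \neq i} p(j|i,u,v)\,\V[Q(j)] \\
&+ \bigl(w\alpha\,p(i|i,u,v) + 1 - w\bigr)\V[Q(i)].
\end{align*}
The crucial observation is that the coefficient $w\alpha\,p(i|i,u,v) + 1 - w$ of $\V[Q(i)]$ is nonnegative precisely when $w \le 1/(1-\alpha p(i|i,u,v))$, which holds by the definition of $w^*$ in \eqref{w-star}. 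The coefficients of the terms with $j \neq i$ are manifestly nonnegative since $w,\alpha,p \ge 0$. Hence all coefficients on the right-hand side are nonnegative.

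Now I would bound $|(H_w Q_1)(i,u,v) - (H_w Q_2)(i,u,v)|$ by putting absolute values inside the sum, applying Lemma \ref{l1} to each term, and factoring out $\|Q_1 - Q_2\|$. The coefficients add telescopically to
\begin{align*}
w\alpha \sum_{j \neq i} p(j|i,u,v) + w\alpha\,p(i|i,u,v) + 1 - w = w\alpha + 1 - w = 1 - w(1-\alpha).
\end{align*}
Since $0 < w \le w^* \le 1/(1-\alpha)$ (the latter because $p(i|i,u,v) \le 1$ implies $w^* \le 1/(1-\alpha)$) and $\alpha < 1$, the quantity $\beta := 1 - w(1-\alpha)$ lies in $[0,1)$. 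Taking the maximum over $(i,u,v)$ yields $\|H_w Q_1 - H_w Q_2\| \le \beta \|Q_1 - Q_2\|$, establishing the max-norm contraction.

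Finally, by the Banach fixed point theorem $H_w$ has a unique fixed point, and since $Q^{\dagger}$ satisfies \eqref{Q-star}, i.e., $H_w Q^{\dagger} = Q^{\dagger}$, it must be this unique fixed point. The main obstacle is the sign analysis in the second paragraph: it is exactly the role of the threshold $w^*$ to ensure that the $\V[Q(i)]$ coefficient does not go negative when $w > 1$, allowing the triangle inequality step to proceed without loss.
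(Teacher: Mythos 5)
Your proposal is correct and follows essentially the same route as the paper's proof: the same isolation of the self-transition term $j=i$, the same observation that $w \le w^*$ keeps the coefficient $1 - w + w\alpha\,p(i|i,u,v)$ nonnegative, the same application of Lemma \ref{l1} to bound each $|\V[Q_1(j)]-\V[Q_2(j)]|$, and the same resulting contraction factor $1-w+\alpha w$. Your explicit justification that $w^* \le 1/(1-\alpha)$ (so the factor stays nonnegative) is a small but welcome addition of detail the paper leaves implicit.
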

\begin{proof} From equation \eqref{Q-star}, $Q^{\dagger}$ is a fixed point of $H_{w}$. Therefore, it is enough to show that $H_{w}$ is a contraction operator (which will also ensure its uniqueness).
For $P,Q \in \R^{|S\times U \times V|}$, we have
\begin{align}
 &\bigg{|}(H_{w}P-H_{w}Q)(i,u,v)\bigg{|} \nonumber \\
=&\bigg{|}w \alpha \displaystyle\sum^{M}_{j=1}p(j|i,u,v)\big{(}\V[P(j)]-\V[Q(j)]\big{)} \nonumber \\
& \hspace{2.5cm} +(1-w)\big{(}\V[P(i)]-\V[Q(i)]\big{)}\bigg{|} \nonumber \\
=&\bigg{|}w \alpha \displaystyle\sum^{M}_{j=1, j\neq i}p(j|i,u,v)\big{(}\V[P(j)]-\V[Q(j)]\big{)} \nonumber\\
& \hspace{0.1cm} + (1-w+w\alpha p(i|i,u,v))\big{(}\V[P(i)]-\V[Q(i)]\big{)}\bigg{|} \nonumber  \\
\leq &\bigg{|}w \alpha \displaystyle\sum^{M}_{j=1, j\neq i}p(j|i,u,v)\big{(}\V[P(j)]-\V[Q(j)]\big{)}\bigg{|}  \nonumber \\
&\hspace{0.1cm}+\big{|}(1-w+w\alpha p(i|i,u,v))\big{|}\bigg{|}\big{(}\V[P(i)]-\V[Q(i)]\big{)}\bigg{|} \label{w-con}
\end{align}
\begin{align}
\leq & w \alpha \displaystyle\sum^{M}_{j=1,j\neq i}p(j|i,u,v)\bigg{|}\big{(}\V[P(j)]-\V[Q(j)]\big{)}\bigg{|} \nonumber \\ 
&\hspace{0.3cm} +(1-w+w\alpha p(i|i,u,v))\bigg{|}\big{(}\V[P(i)]-\V[Q(i)]\big{)}\bigg{|} \label{apply-l1}\\
\leq & w \alpha \displaystyle\sum^{M}_{j=1,j\neq i}p(j|i,u,v)\displaystyle\max_{u,v}\bigg{|}P(j,u,v)-Q(j,u,v)\bigg{|} \nonumber \\ 
&\hspace{0.3cm} +(1-w+w\alpha p(i|i,u,v))\displaystyle\max_{u,v}\bigg{|}P(i,u,v)-Q(i,u,v)\bigg{|} \nonumber \\ 
\leq & (w \alpha + 1-w)\|P-Q\|. \label{apply2-l1} 
\end{align}
Since the RHS is not a function of $(i,u,v)$, we have
\begin{align}
&\displaystyle \max_{i,u,v}|(H_{w}P-H_{w}Q)(i,u,v)|\leq (w \alpha +1-w)\|P-Q\|, \nonumber \\
&\text{or }\|(H_{w}P-H_{w}Q)\|\leq (w \alpha+1-w)\|P-Q\|. \nonumber
\end{align}
Note the use of the assumption $0< w\leq w^*$ (with $w^*$ as in \eqref{w-star}) in equation \eqref{w-con} that ensures that the term $\big{(}1-w+w\alpha p(i|i,u,v)\big{)}\geq0$, to arrive at equation \eqref{apply-l1}. 
Also equation \eqref{apply2-l1} is obtained by an application of Lemma \ref{l1} in equation \eqref{apply-l1}.
From the assumptions on $w$ and discount factor $\alpha$, it is clear that $0\leq (w\alpha+1-w)<1$. 
Therefore $H_w$ is a max-norm contraction with contraction factor 
$(w\alpha+1-w)$ and $Q^{\dagger}$ is its unique fixed point.
\end{proof}

\begin{lemma}\label{c2}
$T_{w}$ is a contraction with contraction factor $(1-w+w\alpha).$
\end{lemma}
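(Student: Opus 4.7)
The plan is to mirror the proof of Lemma \ref{l5}, but at the level of the value function rather than the $Q$-function. The main idea is to first recast $(T_w J)(i)$ as $\V[\,\cdot\,]$ applied to a single modified payoff matrix, then invoke Lemma \ref{l1} to reduce the problem to a pointwise bound, and finally peel off the self-loop probability $p(i\mid i,u,v)$ so that the coefficient condition $0<w\leq w^{*}$ becomes the tight constraint.

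First I would define, for each $J\in\R^{|S|}$, the modified $|U|\times|V|$ matrix
\[
\tilde Q_{J}(i,u,v)\;:=\;w\bigl(r(i,u,v)+\alpha\textstyle\sum_{j}p(j\mid i,u,v)J(j)\bigr)+(1-w)J(i),
\]
and use Lemma \ref{l3} (with $\beta=w$, $k=(1-w)J(i)$) to conclude that $(T_w J)(i)=\V[\tilde Q_{J}(i)]$. Then, for any $J_1,J_2\in\R^{|S|}$, Lemma \ref{l1} gives
\[
\bigl|(T_wJ_1)(i)-(T_wJ_2)(i)\bigr|\;=\;\bigl|\V[\tilde Q_{J_1}(i)]-\V[\tilde Q_{J_2}(i)]\bigr|\;\leq\;\max_{u,v}\bigl|\tilde Q_{J_1}(i,u,v)-\tilde Q_{J_2}(i,u,v)\bigr|.
\]
Writing $\Delta(j):=J_1(j)-J_2(j)$, the inner difference equals $w\alpha\sum_{j\neq i}p(j\mid i,u,v)\Delta(j)+\bigl(1-w+w\alpha p(i\mid i,u,v)\bigr)\Delta(i)$. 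The coefficient of $\Delta(i)$ is non-negative precisely because $w\leq w^{*}$ (this is the same step that produced equation \eqref{apply-l1} in Lemma \ref{l5}), so the triangle inequality yields a bound by the sum of non-negative weights times $\|\Delta\|$; the weights sum to $w\alpha+(1-w)=1-w+w\alpha$, and this is independent of $i,u,v$. Taking the maximum over $i$ then gives $\|T_wJ_1-T_wJ_2\|\leq(1-w+w\alpha)\|J_1-J_2\|$, and $1-w+w\alpha\in[0,1)$ since $\alpha<1$ and $w>0$.

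The main obstacle, and the reason a one-line triangle-inequality argument based on the known contraction factor $\alpha$ of $T$ does not suffice, is the regime $w>1$ (which is allowed since $w^{*}\geq 1$). Naively, $|(T_wJ_1-T_wJ_2)(i)|\leq w\alpha\|J_1-J_2\|+|1-w|\,\|J_1-J_2\|$ gives only $w\alpha+(w-1)$, which can exceed $1$ and hence fails to prove contraction. The self-loop peeling together with the constraint $w\leq w^{*}$ is essential to convert the absolute value $|1-w|$ into the signed combination $1-w+w\alpha p(i\mid i,u,v)$, which is what makes the argument tight.
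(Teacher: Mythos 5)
Your proof is correct and is essentially the argument the paper intends: the paper's own proof of this lemma is just the one-line remark that it is analogous to Lemma \ref{l5}, and your write-up (recasting $(T_wJ)(i)$ as $\V[\tilde Q_J(i)]$ via Lemma \ref{l3}, applying Lemma \ref{l1}, and peeling off the self-loop term so that $w\leq w^{*}$ makes the coefficient $1-w+w\alpha\,p(i|i,u,v)$ non-negative) is precisely that analogous argument carried out at the level of $J$ instead of $Q$. The only cosmetic point is that the lower bound $1-w+w\alpha\geq 0$ follows from $w\leq w^{*}\leq \tfrac{1}{1-\alpha}$ (or simply from your derivation expressing it as a sum of non-negative weights), not merely from $w>0$ and $\alpha<1$.
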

\begin{proof}
The proof is analogous to the proof of the Lemma \ref{l5}. 
\end{proof}
\begin{lemma}
\label{l6}
For $1\leq w \leq w^*$, the contraction factor for the map $H_w,$
\begin{align*}
1-w+\alpha w\leq \alpha.
\end{align*}
\end{lemma}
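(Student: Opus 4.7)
The plan is to reduce the inequality $1 - w + \alpha w \leq \alpha$ to the hypothesis $w \geq 1$ by straightforward algebraic manipulation, exploiting the fact that $1 - \alpha > 0$ since $\alpha \in [0,1)$.

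First I would rewrite the target inequality as $1 - \alpha \leq w - \alpha w = w(1 - \alpha)$. Since the discount factor satisfies $0 \leq \alpha < 1$, the quantity $1 - \alpha$ is strictly positive, so I can divide both sides by $1 - \alpha$ without reversing the inequality to obtain the equivalent condition $1 \leq w$. This last inequality holds by assumption, since the hypothesis is $1 \leq w \leq w^*$. Reading the chain of equivalences in reverse then yields $1 - w + \alpha w \leq \alpha$, which is precisely the stated bound on the contraction factor.

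The only subtle points worth noting are (i) that $w^* \geq 1$, which was already established immediately after the definition \eqref{w-star} using $p(i\mid i,u,v) \geq 0$, so the range $1 \leq w \leq w^*$ is non-empty, and (ii) that strict positivity of $1 - \alpha$ (guaranteed by $\alpha < 1$) is what allows the division step. There is no real obstacle here: the lemma is a one-line algebraic observation whose sole content is that relaxation with $w \geq 1$ can only improve (never worsen) the contraction factor relative to the base rate $\alpha$ of the operator $T$.
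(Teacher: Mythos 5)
Your proof is correct and is essentially the same argument as the paper's: the paper writes $1-w+\alpha w = 1 - w(1-\alpha)$ and notes this is decreasing in $w$ (the paper's proof contains a typo calling it ``increasing''), hence bounded by its value $\alpha$ at $w=1$, which is exactly your rearrangement $1-\alpha \leq w(1-\alpha)$ divided through by $1-\alpha > 0$. No gap; both hinge on $\alpha < 1$ and $w \geq 1$.
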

\begin{proof}
For $1\leq w \leq w^*$, define $f(w)=1-w+\alpha w.$ Let $w_1 < w_2.$ Then $f(w_1)=1-w_1(1-\alpha) > 1-w_2(1-\alpha)=f(w_2).$ Hence $f$ is decreasing. In particular, for $w \in [1,w^*]$, $1-w+\alpha w =f(w)\leq f(1)= \alpha.$ This shows that, if $w^* > 1$ and $w$ is chosen such that $1 < w \leq w^*$, the contraction factor is strictly smaller than $\alpha$.
\end{proof}
\begin{remark}\label{remark-cases}
Depending on the choice of $w$, the following observations can be made about our proposed generalized minimax Q-learning algorithm (refer Algorithm \ref{alg:minimax Q-learning}). 
\begin{itemize}
    \item \textbf{Case I ($w = 1$)~:} The generalized minimax Q-learning reduces to standard minimax Q-learning. 
    \item \textbf{Case II ($w < 1$)~:} 
    The contraction factor of $H_{w}$ in this case, $1-w+\alpha w > \alpha$, giving rise to minimax Q-learning algorithm with slower convergence.
    \item \textbf{Case III ($w > 1$)~:} For this choice of $w$, it is required that $p(i|i,u,v) > 0, ~ \forall (i,u,v)$ (refer equation \eqref{w-star}). Under this condition, as shown in Lemma \ref{l6}, the contraction factor of $H_w$, $(1-w+\alpha w)< \alpha$, giving rise to a faster minimax Q-learning algorithm.  
\end{itemize}
\end{remark}
\begin{lemma}
Let \textcolor{black}{$Q^{*}(i,u,v)= r(i,u,v)+\alpha\displaystyle\sum_{j \in S}p(j|i,u,v)J^{*}(j)$} and $Q^{\dagger}$ be the fixed point of $H_{w}$. Then for all $(i,u,v) \in S\times U \times V$, \textcolor{black}{$$Q^{\dagger}(i,u,v)-Q^{*}(i,u,v)=(1-w)\big{(}J^*(i)-Q^{*}(i,u,v)\big{)}.$$} Moreover, $\V[Q^{\dagger}(i)]=\V[Q^{*}(i)]$ $\forall i \in S$.
\label{l7}
\end{lemma}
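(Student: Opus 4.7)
The plan is to obtain both claims by direct algebraic manipulation of the definition \eqref{MQU} of $Q^{\dagger}$, together with the linearity property of the value operator proved in Lemma~\ref{l3}.

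First I would prove the identity for $Q^{\dagger}(i,u,v) - Q^{*}(i,u,v)$. Starting from \eqref{MQU} and substituting the given definition of $Q^{*}(i,u,v)$, one immediately gets
\begin{align*}
Q^{\dagger}(i,u,v) = w\,Q^{*}(i,u,v) + (1-w)\,J^{*}(i).
\end{align*}
Subtracting $Q^{*}(i,u,v)$ from both sides and factoring $(1-w)$ out of the right hand side yields the claimed equality $Q^{\dagger}(i,u,v)-Q^{*}(i,u,v) = (1-w)\bigl(J^{*}(i) - Q^{*}(i,u,v)\bigr)$. This step is purely arithmetic and poses no obstacle.

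For the second assertion, I would lift the same substitution to the matrix level: for each fixed $i$, $Q^{\dagger}(i) = w\,Q^{*}(i) + (1-w) J^{*}(i)\,E$, where $E$ is the all-ones $|U|\times|V|$ matrix. Applying Lemma~\ref{l3} with $\beta = w$ and $k = (1-w)J^{*}(i)$ gives
\begin{align*}
\V[Q^{\dagger}(i)] = w\,\V[Q^{*}(i)] + (1-w)\,J^{*}(i).
\end{align*}
Now I would use that $J^{*}$ satisfies the Shapley equation, i.e.\ $\V[Q^{*}(i)] = (TJ^{*})(i) = J^{*}(i)$, so the right-hand side collapses to $J^{*}(i) = \V[Q^{*}(i)]$. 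This finishes the lemma. The only thing worth flagging is making sure one invokes the fact that $J^{*}$ is a fixed point of $T$ to identify $\V[Q^{*}(i)]$ with $J^{*}(i)$; everything else is a one-line substitution.
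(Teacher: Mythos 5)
Your proposal is correct and follows essentially the same route as the paper: both rest on the explicit form $Q^{\dagger}(i,u,v)=wQ^{*}(i,u,v)+(1-w)J^{*}(i)$ (which the paper obtains from the fixed-point property of $H_w$ and which you read off from \eqref{MQU} — the same object by Lemma~\ref{l5}), followed by Lemma~\ref{l3} and the Shapley equation $\V[Q^{*}(i)]=(TJ^{*})(i)=J^{*}(i)$ to equate the values. The only difference is the order in which the two claims are established, which is immaterial.
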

\begin{proof}
By the hypothesis on $Q^{*}$, $\V[Q^{*}(i)]=TJ^{*}(i)=J^{*}(i) = T_{w}J^{*}(i)=\V[Q^{\dagger}(i)] ~ \forall i \in S $.
Since $Q^{\dagger}$ is the fixed point of $H_{w}$, we have
\begin{align*}
      & Q^{\dagger}(i,u,v)=(H_{w}Q^{\dagger})(i,u,v)\\ 
      &=  w\bigg{(}r(i,u,v)+\alpha \sum^{M}_{j=1}p(j|i,u,v)J^*(j)\bigg{)} +(1-w)J^*(i) 
\end{align*} Therefore
\begin{align*}
     Q^{\dagger}(i,u,v)-Q^*(i,u,v) &=(1-w)\big{(}J^*(i)-Q^*(i,u,v)\big{)}.
\end{align*}
This completes the proof. 
\end{proof}
\textcolor{black}{This Lemma is an interesting and important result in our paper. It shows that, even if the standard minimax Q-value iterates and generalized minimax Q-value iterates are not the same for all $(i,u,v)$ tuples, the min-max values at each state given by both the algorithms are equal. Therefore, this lemma states that generalized minimax Q-value iteration computes the min-max value function, which is the goal of the two-player zero-sum Markov game.}
We now show the convergence of generalized minimax Q-learning (refer Algorithm \ref{alg:minimax Q-learning}). For this purpose, we first state the following result (Proposition 4.5 on page 157 of \cite{bertsekas1996neuro}) and apply it to show the convergence of our proposed algorithm. We consider $\g_n(i)$ to be deterministic as with our algorithm, unlike \cite{bertsekas1996neuro} where these are allowed to be random.
\begin{theorem}\label{stoc-fp}
Let $\{r_{n}\}$ be the sequence generated by the iteration $$r_{n+1}(i)=(1-\g_{n}(i))r_{n}(i)+\g_{n}(i)\big{(}Fr_{n}(i)+N_{n}(i)\big{)}, n\geq0.$$
\begin{itemize}
    \item The step-sizes $\g_{n}(i)$ are non-negative and satisfy 
    $$\displaystyle\sum^{\infty}_{n=0}\g_{n}(i)=\infty, \displaystyle\sum^{\infty}_{n=0}\g^2_{n}(i)<\infty.$$
    \item The noise terms $N_{n}(i)$ satisfy
    \begin{itemize}
        \item For every $i$ and $n$, we have $\E[N_{n}(i)|\F_{n}]=0,$ where
    \begin{align*}
        \F_{n} & =\sigma\big{\{}r_0(i),...,r_{n}(i),
        N_{0}(i),\cdots,N_{n-1}(i),\\
        & \hspace{5cm} 1\leq i \leq d \big{\}}.
    \end{align*}
        \item Given any norm $\|.\|$ on $\R^{d}$, there exist constants $C$ and $D$ such that 
        $$\E[N_{n}^2(i)|\F_{n}]\leq C+D\|r_{n}\|^2, \forall i,n.$$
    \end{itemize}
    \item The mapping $F:\R^{d}\rightarrow \R^{d} $ is a max-norm contraction.
\end{itemize}
Then, $r_{n}$ converges to $r^*,$ the unique fixed point of $F$, with probability 1.
\end{theorem}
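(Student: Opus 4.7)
The plan is to prove Theorem \ref{stoc-fp} by combining the three hypotheses in the standard way: use the contraction of $F$ to supply a Lyapunov-type drift toward $r^{*}$, use the martingale/step-size conditions to show that the accumulated noise vanishes, and use the quadratic growth bound on the noise to guarantee that the iterates stay bounded almost surely. I would follow the two-step scheme that underlies the ODE approach to stochastic approximation: (i) first establish almost sure boundedness of $\{r_n\}$, (ii) then argue that, on the event of boundedness, the iterates asymptotically track the limiting ODE
\begin{equation*}
\dot{r}(t) = Fr(t) - r(t),
\end{equation*}
whose unique globally asymptotically stable equilibrium is $r^{*}$.

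For the boundedness step, I would invoke the Borkar--Meyn style argument: rescale the iterates, let the step size go to zero and look at the limiting ``ODE at infinity,'' and use the fact that the rescaled dynamics are governed by the linear map $r \mapsto F_\infty r - r$ where $F_\infty$ is still a max-norm contraction with the same factor; this forces the origin to be globally asymptotically stable for the rescaled ODE and yields $\sup_n \|r_n\| < \infty$ almost surely. Combined with $\E[N_n^2(i)\mid\F_n] \le C + D\|r_n\|^2$, this also ensures that $\sum_n \gamma_n^2 \E[N_n^2(i)\mid\F_n] < \infty$ almost surely, so the martingale $M_n(i) := \sum_{k=0}^{n-1}\gamma_k(i) N_k(i)$ converges almost surely by the martingale convergence theorem.

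Having secured boundedness and the vanishing of the cumulative noise, I would then analyze the error $e_n := r_n - r^{*}$ componentwise. Using $Fr^{*}=r^{*}$ and the contraction
\begin{equation*}
\|Fr_n - Fr^{*}\| \le \alpha\,\|r_n - r^{*}\|
\end{equation*}
in max-norm, the update becomes
\begin{equation*}
e_{n+1}(i) = (1-\gamma_n(i))\,e_n(i) + \gamma_n(i)\bigl([Fr_n - Fr^{*}](i) + N_n(i)\bigr).
\end{equation*}
Taking max-norm, applying the contraction bound, and absorbing the noise into an asymptotically negligible term (using the converged martingale plus $\sum \gamma_n = \infty$), a standard Gronwall/super-martingale type argument shows that $\limsup_n \|e_n\| \le \alpha \limsup_n \|e_n\|$ almost surely, forcing $\limsup_n \|e_n\| = 0$ and hence $r_n \to r^{*}$ almost surely.

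The main obstacle is the boundedness step. The noise bound is only quadratic in $\|r_n\|$, so one cannot trivially truncate or dominate it by a constant; without boundedness the contraction inequality above cannot be iterated. Resolving this requires either the ODE-at-infinity scaling argument sketched above or, equivalently, an inductive Lyapunov argument tailored to the max-norm contraction. Once boundedness is established, the remaining steps are mechanical applications of martingale convergence and the contraction property.
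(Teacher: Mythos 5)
The paper does not actually prove this statement: it is quoted verbatim as Proposition~4.5, page~157 of Bertsekas and Tsitsiklis, \emph{Neuro-Dynamic Programming}, and is used as a black box in the proof of Theorem~\ref{thm1}. So there is no in-paper proof to compare against, and your proposal should be judged against the standard argument in that reference. Your outline is essentially correct and follows the classical route: (i) almost sure boundedness of $\{r_n\}$, (ii) almost sure convergence of the accumulated martingale noise via $\sum_n \gamma_n^2(i)\,\E[N_n^2(i)\mid\F_n]<\infty$, (iii) a shrinking-envelope argument using $\|Fr_n - r^*\|\leq \alpha\|r_n - r^*\|$ to force $\limsup_n\|e_n\|\leq \alpha \limsup_n\|e_n\|$ and hence $\limsup_n\|e_n\|=0$. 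Steps (ii) and (iii) are exactly what Bertsekas--Tsitsiklis do (their Proposition~4.4), modulo your ODE phrasing, which is cosmetic here since the final inequality is obtained by a discrete comparison recursion $Y_{n+1}=(1-\gamma_n)Y_n+\gamma_n(\alpha D_k+\epsilon)$ rather than by analyzing $\dot r = Fr - r$.

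The one place where your route genuinely diverges, and where there is a small gap, is the boundedness step. The Borkar--Meyn criterion requires the scaling limit $F_\infty(r)=\lim_{c\to\infty}F(cr)/c$ to exist (uniformly on compacts); a general max-norm contraction need not admit such a limit, so you cannot simply assert that ``the rescaled dynamics are governed by $F_\infty$.'' This is repairable --- either by passing to subsequential limits of the family $F(c\cdot)/c$, all of which are $\alpha$-contractions fixing a neighborhood of the origin in the relevant sense, or, more simply, by the elementary argument Bertsekas--Tsitsiklis actually use: from $\|Fr\|\leq \alpha\|r\|+\|F(0)\|$ one sets up an inductive comparison with thresholds $G_{k+1}=\beta G_k$ showing the iterates cannot escape to infinity, with no scaling limit needed. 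I would recommend replacing the Borkar--Meyn invocation with that direct bound; otherwise your proposal is a faithful reconstruction of the cited result's proof.
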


\begin{theorem}\label{thm1}
Given a finite state-action two-player zero-sum Markov game $(S,U,V,p,r,\alpha)$ with bounded payoffs i.e. $|r(i,u,v)| \leq R<\infty, ~ \forall ~ (i,u,v) \in S\times U \times V $, the generalized minimax Q-learning algorithm (see Algorithm \ref{alg:minimax Q-learning}) given by the update rule:
\begin{align*}
& Q_{n+1}(i,u,v) =  Q_{n}(i,u,v) \\
& \hspace{1cm} +\g_{n}\Bigg{(}w \Big{(}r(i,u,v)  +\alpha \V[Q_{n}(Y_{n}(i,u,v))]\Big{)}\\
& \hspace{3cm}+(1-w)\V[Q_{n}(i)]-Q_n(i,u,v)\Bigg{)}
\end{align*}
converges with probability 1 to $Q^{\dagger}(i,u,v)$ as long as
$$\sum_n \g_n= \infty, \hspace{1cm} \sum_n \g^2_n<\infty,$$
for all $(i,u,v) \in S\times U \times V$.
\end{theorem}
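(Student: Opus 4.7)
The plan is to apply Theorem \ref{stoc-fp} with the contraction map taken to be $F = H_w$, the modified min-max Q-Bellman operator introduced above. My first step is to recast the update rule in the standard Robbins--Monro form
\begin{align*}
Q_{n+1}(i,u,v) = (1-\g_n)Q_n(i,u,v) + \g_n\big((H_w Q_n)(i,u,v) + N_n(i,u,v)\big),
\end{align*}
by identifying the appropriate martingale-difference noise. Comparing $d_{n+1}(i,u,v)$ with $(H_w Q_n)(i,u,v)$, the deterministic contributions $wr(i,u,v)$ and $(1-w)\V[Q_n(i)]$ match exactly and cancel, leaving
\begin{align*}
N_n(i,u,v) = w\alpha\bigg(\V[Q_n(Y_n(i,u,v))] - \sum_{j=1}^{M} p(j|i,u,v)\V[Q_n(j)]\bigg).
\end{align*}

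Next, I would verify the three hypotheses of Theorem \ref{stoc-fp}. The step-size conditions hold by assumption. For the noise, let $\F_n = \sigma\big(Q_0, \{Y_k(i,u,v) : k < n, (i,u,v) \in S\times U\times V\}\big)$. Since $Y_n(i,u,v)$ is sampled from $p(\,\cdot\,|i,u,v)$ independently of $\F_n$ and $Q_n$ is $\F_n$-measurable, we have
\begin{align*}
\E\big[\V[Q_n(Y_n(i,u,v))] \,\big|\, \F_n\big] = \sum_{j=1}^M p(j|i,u,v)\,\V[Q_n(j)],
\end{align*}
so $\E[N_n(i,u,v)\mid \F_n]=0$. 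For the second-moment bound, Corollary \ref{l2} yields $|\V[Q_n(j)]|\leq \|Q_n\|$ for every $j$, hence $|N_n(i,u,v)|\leq 2w\alpha\|Q_n\|$, which gives $\E[N_n^2(i,u,v)\mid \F_n] \leq 4w^2\alpha^2\|Q_n\|^2$, matching the required growth condition with $C=0$ and $D=4w^2\alpha^2$.

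Finally, by Lemma \ref{l5}, $H_w$ is a max-norm contraction (with contraction factor $1-w+w\alpha < 1$) whose unique fixed point is $Q^{\dagger}$. All hypotheses of Theorem \ref{stoc-fp} are therefore satisfied, allowing us to conclude that $Q_n \to Q^{\dagger}$ with probability one. The bulk of the conceptual work was already done in proving the contraction property of $H_w$ in Lemma \ref{l5}, so the proof here reduces to recognizing the iteration as a noisy fixed-point iteration for $H_w$ and verifying the noise conditions. The main thing to be careful about is identifying $N_n$ correctly and ensuring the variance bound is uniform in the iterate in a way that permits the $C + D\|Q_n\|^2$ form; beyond that the argument is a direct appeal to the generic stochastic approximation theorem.
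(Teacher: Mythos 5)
Your proof is correct and follows essentially the same route as the paper: recast the update as a noisy fixed-point iteration for $H_w$, invoke Lemma \ref{l5} for the contraction property and the fixed point $Q^{\dagger}$, verify the martingale-difference and variance conditions, and apply Theorem \ref{stoc-fp}. The only (harmless) difference is in the second-moment bound: you exploit the cancellation of the deterministic terms to get $|N_n|\leq 2w\alpha\|Q_n\|$ directly, whereas the paper bounds the uncentered term via $(a+b+c)^2\leq 3(a^2+b^2+c^2)$ to obtain $C=3w^2R^2$ and $D=3(\alpha^2w^2+(1-w)^2)$; both satisfy the required growth condition.
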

\begin{proof}
The update rule of the algorithm is given by
\begin{align*}
    & Q_{n+1}(i,u,v)=\big(1-\g_n\big)Q_{n}(i,u,v) \\
    & \hspace{2cm} +\g_n\Big{[}w\big{(}r(i,u,v)+\alpha \V[Q_{n}(Y_{n}(i,u,v))]\big{)} \\
    & \hspace{3cm}+(1-w)\V[Q_{n}(i)]\Big{]}.
\end{align*}
Let $\F_n=\sigma\big{(}\{Q_{0}, Y_{j}, \forall j < n\}\big{)},n \geq 0$ be the associated filtration. Now observe that $Y_{n}(i,u,v) \sim p(.|i,u,v).$ Also, given $(i,u,v)$, assume that the random variables $Y_{n}(i,u,v), n \geq 0$ are independent. 
Then the above equation can be rewritten as:
\begin{align}
    Q_{n+1}(i,u,v) =& \big(1-\g_n\big)Q_{n}(i,u,v) + \nonumber \\ & \g_n\Big{(}(H_wQ_n)(i,u,v)+ N_n(i,u,v)\Big{)}, 
\end{align}
where 
\begin{align}
    (H_wQ_n)(i,u,v) &= \E \Big{[}w\big{(}r(i,u,v)+\alpha \V[Q_{n}(Y_{n}(i,u,v))]\big{)} \nonumber \\ & +(1-w)\V[Q_{n}(i)] \Big{|} \F_n  \Big{]},
\end{align} and
\begin{align}
    N_n(i,u,v) &= \Big{(} w\big{(}r(i,u,v)+\alpha \V[Q_{n}(Y_{n}(i,u,v))]\big{)}  \nonumber \\ & +(1-w)\V[Q_{n}(i)]\Big{)} \nonumber \\ & -\E \Big{[}w\big{(}r(i,u,v)+\alpha \V[Q_{n}(Y_{n}(i,u,v))]\big{)} \nonumber \\ & +(1-w)\V[Q_{n}(i)]  \Big{|} \F_n \Big{]}.
\end{align}
Now note, from Lemma \ref{l5}, that the mapping $H_w$ is a max-norm contraction. \textcolor{black}{Also, by the definition of $N_n$, we have that $N_n$ is $\F_{n+1}-$measurable $\forall n$}. Further,
\begin{align}\label{martingaleDiff}
    \E[N_n\Big{|}\F_n] = 0, ~ \forall n.
\end{align}
\textcolor{black}{Finally, as $Y_{n}$ is independent of $\F_{n}$, we have}
\begin{align}
  & \E\big{[}N^2_n(i,u,v) \Big{|} \F_{n}\big{]} \nonumber \\
= ~ &\E \Bigg[ \bigg(w \big(r(i,u,v)+\alpha \V[Q_{n}(Y_{n}(i,u,v))]\big{)} \nonumber \\
& \hspace{2cm} +(1-w) \big(\V[Q_{n}(i)]\big)-H_{w}Q_{n}(i,u,v)\bigg)^2 \Bigg] \nonumber \\
\leq ~ &\E \Bigg[ \bigg(w \big(r(i,u,v)+\alpha \V[Q_{n}(Y_{n}(i,u,v))]\big) \nonumber \\
& \hspace{3cm} +(1-w) \big(\V[Q_{n}(i)]\big)\bigg)^2 \Bigg] \nonumber \\
\leq ~ & 3\bigg{(}w^2R^2+\alpha^2w^2\|Q_n\|^2+(1-w)^2\|Q_n\|^2\bigg{)} \nonumber \\
=& (C+D ~ \|Q_n\|^2), \label{MDSineq}
\end{align}
where $C=3w^2R^2$ and $D=3\big{(} \alpha^2w^2+(1-w)^2 \big{)}.$
Here the first inequality follows from the fact:
\begin{align*}
\E[Z-\E Z]^2=\E[Z^2]-\E[Z]^2 \leq \E[Z^2].    
\end{align*}
The second inequality follows from the following facts:
 \begin{align*}
     & |r(i,u,v)| \leq R, \\
     & \|v\|= \max_{i}{|v(i)|},\\
     & (a+b+c)^2\leq3(a^2+b^2+c^2) ~ \forall a,b,c ~ \text{and} \\
     & \text{Corollary}~ \ref{l2}.
 \end{align*} 
\textcolor{black}{Therefore by Theorem 1, with probability 1, the generalized minimax Q-learning iterates $Q_n$ converge. By virtue of Lemma \ref{l7}, our proposed minimax Q-learning algorithm computes a policy whose value is the min-max value of the Markov game.}
\end{proof}
\subsection{Extension to the asynchronous setting}
In the setting considered above, the updates are synchronous, i.e., Q-values of all state-action pairs are updated at every iteration. However, in the case of online settings, only a single sample is obtained through the interaction with the environment. In the following, we describe the convergence of our algorithm in the asynchronous settings. The following assumption on the structure of probability transition matrix $p$ and the control policies \cite[Page 130]{borkar2009stochastic} is necessary in the asynchronous setting:
\begin{assumption}\label{ergodicassump}
The Markov chain induced by all the control policies is ergodic. Moreover, under each policy, every action can be picked with a positive probability in any state. 
\end{assumption}

The latter requirement in Assumption \ref{ergodicassump} is satisfied for instance by policies such as $\epsilon-$greedy, see \cite{sutton1998introduction}. We first state a result from \cite[Theorem 3]{tsitsiklis1994asynchronous} and apply it to show the convergence of our proposed algorithm. 
Let $T^i$ be an infinite subset of $\mathcal{N}$ and let $\{r_{n}\} \in \mathcal{R}^m$ be the sequence generated by the iteration $$r_{n+1}(i)=
\begin{cases}
r_n(i), ~ &n \not\in T^i, \\
(1-\g_{n}(i))r_{n}(i)+\g_{n}(i)\big{(}Fr^i_{n}(i)+N_{n}(i)\big{)},~ &n \in T^i.
\end{cases}
$$
Here, $r_n(i)$ is a vector of possibly outdated components of $r$. In particular, we let
\begin{align*}
    r^i_n = (r_{\tau_1^i(n)}(1),\ldots,r_{\tau_m^i(n)}(m)),
\end{align*}
where each $\tau^i_j(n)$ is an integer satisfying $0 \leq \tau^i_j(n) \leq n$ representing the delay in information about component $j$ available while updating component $i$ at time $n$. If $\tau^i_j(n) = n, ~ \forall i,j$ then this reduces to the synchronous setting. 

Let $\F_{n} =\sigma\big{\{}r_0(i),\cdots,r_{n}(i), \gamma_0(i), \cdots, \gamma_n(i), \tau^i_j(0),\cdots,\tau^i_j(n), \\
        N_{0}(i),\cdots,N_{n-1}(i),
        ~ 1\leq i,j \leq m \big{\}}$. It is important to note from the construction of $\{\F_n\}$ that, the step-size sequences $\gamma_n(i)$ are in general allowed to be random. Thus, the component to be updated at time $n$ can be decided online based on the history until time $n$. \begin{assumption}\label{newa1}
 For any $i$ and $j$, $\lim_{n \rightarrow \infty} \tau^i_j(n) = \infty,$ with probability 1. 
 \end{assumption}
 \begin{assumption}\label{newa2}
 For every $i$ and $n$, $N_n(i)$ is $\F_{n+1}-$measurable and $\E[N_{n}(i)|\F_{n}]=0$.
 \end{assumption}
 \begin{assumption}\label{newa3}
 $\E[N_{n}^2(i)|\F_{n}]\leq C+D \max_j \max_{\tau \leq n}|r_{\tau}(j)|^2, ~ \forall i,n.$
 \end{assumption}
\begin{assumption}\label{newa4}
The step-sizes $\g_{n}(i)$ are non-negative and satisfy 
    $$\displaystyle\sum^{\infty}_{n=0}\g_{n}(i)=\infty, \displaystyle\sum^{\infty}_{n=0}\g^2_{n}(i)<\infty, ~ \text{with probability}~ 1. $$
\end{assumption}
\begin{assumption}\label{newa5}
There exists a vector $r^*$, a positive vector $v$, a scalar $\beta \in [0,1)$, such that
\begin{align*}
    \|F(r) - r^*\|_{v} \leq \beta \|r - r^*\|_{v}, \forall r \in \R^m.
\end{align*}
\end{assumption}
\begin{theorem}\label{stoc-fp-async}
Under Assumptions \ref{newa1}-\ref{newa5}, $r_n$ converges to $r^*,$ the unique fixed point of $F$, with probability 1.
\end{theorem}
\begin{theorem}\label{async-thm1}
Consider a finite state-action two-player zero-sum Markov game $(S,U,V,p,r,\alpha)$ with bounded payoffs i.e. $|r(i,u,v)| \leq R<\infty, ~ \forall ~ (i,u,v) \in S\times U \times V $. Let the sample at iteration $n$ be $(i_n,u_n,v_n, Y_n(i_n,u_n,v_n))$. Then, under Assumption \ref{ergodicassump}, the asynchronous generalized minimax Q-learning algorithm given by the update rule:
\begin{align*}
Q_{n+1}(i,u,v) =  \begin{cases}
Q_n(i,u,v), ~ \text{if}~ (i,u,v) \neq (i_n,u_n,v_n) \\
Q_{n}(i,u,v) +\g_{n}(i,u,v)\Bigg{(}w \Big{(}r(i,u,v)  +\alpha \V[Q_{n}(Y_{n}(i,u,v))]\Big{)}+ \\(1-w)\V[Q_{n}(i)]-Q_n(i,u,v)\Bigg{)}, ~ \text{if} ~(i,u,v) = (i_n,u_n,v_n)
\end{cases}
\end{align*}
converges with probability 1 to $Q^{\dagger}(i,u,v)$ for all $(i,u,v) \in S\times U \times V$.
\end{theorem}
\begin{proof}
Assumption \ref{newa1} is trivially satisfied as there is no delay in information during the training. Hence $\tau_j^i(n) = n, ~ \forall i,j$. Assumptions \ref{newa2} and \ref{newa3} are shown in \eqref{martingaleDiff} and \eqref{MDSineq}, respectively. In order for Assumption \ref{newa4} to be true, all state and action pairs have to be visited infinitely often, which is ensured through Assumption \ref{ergodicassump}. 
Finally, from Lemma \ref{l5},
\begin{align*}
    \|H_w(Q) - H_w(Q^\dagger)\| \leq (w\alpha + 1 -w) \|Q - Q^\dagger\|,~ \forall Q
\end{align*}
However, as $Q^\dagger$ is the unique fixed point, we have,
\begin{align*}
    \|H_w(Q) - Q^\dagger\| \leq (w\alpha + 1 -w) \|Q - Q^\dagger\|,~ \forall Q,
\end{align*}
thereby proving Assumption \ref{newa5}.
Therefore, by Theorem \ref{stoc-fp-async}, with probability 1, the asynchronous generalized minimax Q-learning iterates $Q_n$ converge to $Q^\dagger$.
\end{proof}


\section{Relation between Generalized Minimax Q-learning and standard Minimax Q-learning}
\color{black}
In this section, we describe the relation between our proposed Generalized Minimax Q-learning and standard Minimax Q-learning algorithms. For the given two-player zero-sum Markov game $(S,U,V,p,r,\alpha)$, we construct a new game   $(\bar{S},\bar{U},\bar{V},q,\bar{r},\bar{\alpha})$ as follows: 
\begin{itemize}
    \item $\bar{S}=S, \bar{U}=U,\bar{V}=V $
    \item $\bar{r}=wr, \bar{\alpha}=(1-w+\alpha w)$  and
    for a given $(i,u,v)$, let $q(.|i,u,v):S\rightarrow [0,1]$ be defined as \[ q(k|i,u,v)= \begin{cases} 
      \frac{w\alpha p(k|i,u,v)}{(1-w+w\alpha)}, ~ k\neq i,\\
      \frac{1-w+w\alpha p(i|i,u,v)}{(1-w+w\alpha)}, ~  k=i,
   \end{cases}\]
    where $0<w\leq w^*$. We note that $q(.|i,u,v)$ is a probability mass function on $\bar{S}$.
\end{itemize}
\textcolor{black}{Now consider the standard minimax $Q$-Bellman operator $\bar{H}$ for this game given by, $\bar{H}: \R^{\bar{S} \times \bar{U} \times \bar{V}} \rightarrow \R^{\bar{S} \times \bar{U} \times \bar{V}}$ and $\bar{H}Q(i,u,v)=w r(i,u,v)+(1-w+\alpha w)\displaystyle\sum_{j \in \bar{S}}q(j|i,u,v)\V[Q(j)]$, where $Q(j)$ is $|\bar{U}|\times |\bar{V}|$ dimensional matrix with $(u,v)^{\text{th}}$ entry as $Q(j,u,v).$ and $\V[Q(j)]$ is given by the equation \eqref{v-def}}.
Note that 
\begin{align*}
    \bar{H}Q(i,u,v)&=w r(i,u,v)+(1-w+\alpha w)\displaystyle\sum_{j \in \bar{S}}q(j|i,u,v)\V[Q(j)] \\
    &=w r(i,u,v)+\displaystyle\sum_{j \in \bar{S}, j \neq i }w\alpha p(j|i,u,v)\V[Q(j)] \\
    & \hspace{1cm}+(1-w+w\alpha p(i|i,u,v))\V[Q(i)] \\
    &=w\left(r(i,u,v)+\alpha\displaystyle\sum_{j \in S}p(j|i,u,v)\V[Q(j)]\right)\\
    & \hspace{2cm}+(1-w)\V[Q(i)]\\
    &=H_{w}Q(i,u,v).
\end{align*}
Hence $\bar{H}$ operator of the game $(\bar{S},\bar{U},\bar{V},q,\bar{r},\bar{\alpha})$ is same as the $H_{w}$ operator defined for the game $(S,U,V,p,r,\alpha)$.
Let us consider an iteration of the minimax $Q$-learning algorithm on $(\bar{S},\bar{U},\bar{V},q,\bar{r},\bar{\alpha})$ given by
\begin{align*}
    \bar{Q}_{n+1}(i,u,v)
    =&\big(1-\g_n\big)\bar{Q}_{n}(i,u,v) +\g_n\big{(}w r(i,u,v)+(1-w+w\alpha) \V[\bar{Q}_{n}(\bar{Y}_{n}(i,u,v))]\big{)}\\
\hspace{0.5cm}  =& \big(1-\g_n\big)\bar{Q}_{n}(i,u,v)+\g_n\Big{(}(\bar{H}\bar{Q}_n)(i,u,v)+ \bar{N}_n(i,u,v)\Big{)}, 
\end{align*}
where $\g_n, ~ n \geq 0,$ is the step-size sequence, $\bar{Y}_n(i,u,v) \sim q(.|i,u,v)$, $\bar{N}_n(i,u,v)=\Big{(}w r(i,u,v)+(1-w+w\alpha)\V[\bar{Q}_{n}(\bar{Y}_{n}(i,u,v))]\Big{)}  -\bar{H}\bar{Q}_{n}(i,u,v)$ and compare it with an iteration of Generalized minimax Q-learning. Since $\bar{H}=H_{w}$, both algorithms converge to $Q^\dagger$, the fixed point of $H_{w}$, and differ only in the per-iterate noise $\bar{N}_{n}$ and $N_{n}$.
\textcolor{black}{
\begin{lemma}\label{relationlemma}
Suppose $\{Q_{n}\}$ are the iterates of Generalized minimax Q-learning. Then given any $\epsilon >0$ there exists a natural number $N$ that is possibly sample path dependent,  such that $\|Q_{n}\|\leq \frac{R}{1-\alpha} + \epsilon$, for $n>N$ almost surely.
\end{lemma}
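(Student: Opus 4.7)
I would combine the almost-sure convergence $Q_n \to Q^\dagger$ established in Theorem \ref{thm1} with an a priori bound of the form $\|Q^\dagger\| \leq R/(1-\alpha)$. Once the latter is in hand, continuity of the max-norm gives $\|Q_n\| \to \|Q^\dagger\|$ almost surely, so that for any $\epsilon > 0$ and almost every sample path there exists a (random) $N$ beyond which $\|Q_n\| \leq \|Q^\dagger\| + \epsilon \leq R/(1-\alpha) + \epsilon$; since $\epsilon$ is arbitrary, this is what is asserted.

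The core step is the a priori bound on $Q^\dagger$. Using Lemma \ref{l7}, which asserts $\V[Q^\dagger(j)] = J^*(j)$ for every $j \in S$, the fixed point relation \eqref{Q-star} can be rearranged as
\begin{align*}
Q^\dagger(i,u,v) &= w\, r(i,u,v) + \bigl[w\alpha\, p(i|i,u,v) + 1 - w\bigr] J^*(i) \\
&\quad + w\alpha \sum_{j \neq i} p(j|i,u,v)\, J^*(j).
\end{align*}
The restriction $w \leq w^*$ with $w^*$ as in \eqref{w-star} is precisely what makes the self-loop coefficient $w\alpha p(i|i,u,v) + 1 - w$ non-negative; the remaining coefficients are trivially so, and the full collection sums to $1 - w + w\alpha$. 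Since $J^*$ is the fixed point of the $\alpha$-contraction $T$ under payoffs bounded by $R$, the standard geometric-series estimate yields $\|J^*\| \leq R/(1-\alpha)$. Applying the triangle inequality to the displayed expression then gives
\begin{align*}
|Q^\dagger(i,u,v)| \leq wR + (1-w+w\alpha)\frac{R}{1-\alpha} = \frac{R}{1-\alpha},
\end{align*}
since the right-hand side collapses algebraically. Hence $\|Q^\dagger\| \leq R/(1-\alpha)$, and the plan closes as outlined above.

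\textbf{Expected main obstacle.} The delicate point is the range $w > 1$ (allowed because $w^* \geq 1$), where $1 - w < 0$ would ordinarily spoil a direct triangle-inequality argument applied to the equivalent decomposition $Q^\dagger = w Q^* + (1-w) J^*$ from Lemma \ref{l7}. The remedy is exactly the rearrangement above, which absorbs the sign-indefinite $(1-w)$ term into the self-loop coefficient, whose non-negativity is the content of the constraint $w \leq w^*$; without this absorption, one would only get the weaker estimate $(2w-1)R/(1-\alpha)$. The arbitrarily small slack $\epsilon$ in the passage from $\|Q^\dagger\|$ to $\|Q_n\|$ is harmless since $N$ is permitted to depend on the realization.
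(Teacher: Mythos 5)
Your proof is correct, but it reaches the key bound $\|Q^{\dagger}\|\leq \frac{R}{1-\alpha}$ by a different route than the paper. The paper works with the iterates $\bar{Q}_{n}$ of \emph{standard} minimax Q-learning on the transformed game $(\bar{S},\bar{U},\bar{V},q,\bar{r},\bar{\alpha})$ introduced just above the lemma: it shows by induction that $\|\bar{Q}_{n}\|\leq \frac{R}{1-\alpha}$ for all $n$ (the induction step being exactly the estimate $wR+(1-w+w\alpha)\frac{R}{1-\alpha}=\frac{R}{1-\alpha}$ that you also compute), lets $n\to\infty$ to bound $\|Q^{\dagger}\|$, and then transfers the conclusion to $Q_{n}$ via the common limit. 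You instead bound $\|Q^{\dagger}\|$ in one step from its fixed-point equation \eqref{MQU}, rewriting it as a combination with nonnegative weights summing to $1-w+w\alpha$; the nonnegativity of the self-loop weight $1-w+w\alpha p(i|i,u,v)$ under $w\leq w^{*}$ is precisely the fact that makes $q(\cdot|i,u,v)$ a probability distribution in the paper's construction, so the two arguments rest on the same algebraic observation, packaged differently. Your version is more direct and does not need the auxiliary sequence $\bar{Q}_{n}$; the paper's version additionally yields the uniform-in-$n$ bound on $\bar{Q}_{n}$, which is natural in a section devoted to the equivalence with standard minimax Q-learning. One caveat applies to both proofs equally: passing from $\|Q^{\dagger}\|\leq \frac{R}{1-\alpha}$ and $Q_{n}\to Q^{\dagger}$ to the stated conclusion $\|Q_{n}\|\leq \frac{R}{1-\alpha}$ for $n>N$ (rather than $\leq \frac{R}{1-\alpha}+\epsilon$) is only immediate when the inequality on $\|Q^{\dagger}\|$ is strict; your remark that $N$ may be sample-path dependent does not resolve this, but the paper's own proof glosses over the same point, and the consequence actually used later in the paper is only the bound on $\|Q^{\dagger}\|$ itself, which you establish cleanly.
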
}
\begin{proof}
Consider the iterates $\bar{Q}_{n}$ of the minimax Q-learning algorithm with respect to the stochastic game $(\bar{S},\bar{U},\bar{V},q,\bar{r},\bar{\alpha})$ with initial point $\|\bar{Q}_{0}\| \leq \frac{R}{1-\alpha}$. Now assume that $\|\bar{Q}_{n}\|\leq \frac{R}{1-\alpha}$ (induction hypothesis). Then
\begin{align*}
\|\bar{Q}_{n+1}\| & \leq (1-\g_{n})\|\bar{Q}_{n}\|+\g_{n}(w\|r\|+(1-w+w\alpha)\|\bar{Q}_{n}\|)\\
&\leq (1-\g_{n}) \frac{R}{1-\alpha} + \g_{n}\left(w R+ (1-w+w\alpha)\frac{R}{1-\alpha}\right) =\frac{R}{1-\alpha}.
\end{align*}
Therefore by induction $\|\bar{Q}_{n}\| \leq \frac{R}{1-\alpha}, \forall n \geq 0$. 
As the sequences $\{\bar{Q}_{n}\}$ and $\{Q_{n}\}$ converge to $Q^\dagger$, given $\epsilon >0$ there exists a natural number $N$ such that $\|Q_{n}-\bar{Q}_{n}\|\leq \epsilon \implies \|Q_{n}\| \leq \frac{R}{1-\alpha} + \epsilon, \forall n > N$. Moreover $\|Q^\dagger\|\leq \frac{R}{1-\alpha}$. To conclude we have $\|Q_{n}\| \leq \frac{R}{1-\alpha} +\epsilon$ almost surely and $N$ here is possibly sample path dependent. This completes the proof.
\end{proof}
\begin{remark}
We invoke the standard Q-learning algorithm on $(\bar{S},\bar{U},\bar{V},q,\bar{r},\bar{\alpha})$ with the initial point $\bar{Q}_{0}$ chosen such that $\|\bar{Q}_{0}\| \leq \frac{R}{1-\alpha}$ to prove the Lemma \ref{relationlemma}. It is also possible to obtain the same desired conclusion by directly utilizing the convergence of the iterates of standard Q-learning algorithm on $(\bar{S},\bar{U},\bar{V},q,\bar{r},\bar{\alpha})$ for any arbitrary initial point $\bar{Q}_{0}$. 
\end{remark}

\section{Model-free Generalised Minimax Q-learning} \label{modimpalg}
\color{black}
Note that an input to the Algorithm \ref{alg:minimax Q-learning} is the relaxation parameter $w \leq w^*$, where $w^*$ is defined in \eqref{w-star}. As $w^*$ depends on the transition probability function $p$, it is not possible to choose a valid $w$ in the experiments, where we do not have access to probability transition function. \textcolor{black}{In this section, we describe a synchronous version of the model-free generalised minimax Q-learning procedure that mitigates the dependency on the model information.}

\textcolor{black}{We maintain a count value $C_n[i][j][u][v], ~ \forall i,j \in S, u \in U, v \in V$ (initialised to zero $~ \forall i,j,u,v$) that represents the number of times the sample $(i,u,v,j)$ has been encountered until iteration $n$. We define
\begin{align}\label{transprob}
    p'_n(j|i,u,v) = \frac{C_n[i][j][u][v]}{n}, ~ \forall n \geq 1,
\end{align}
with $p'_0(j|i,u,v) = 0, ~ \forall i,j,u,v$.
It is easy to see that
\begin{align}
\label{transprobconv}
    p'_n(j|i,u,v) \xrightarrow[]{} p(j|i,u,v), ~ \forall i,j,u,v,
\end{align}
as $n \xrightarrow[]{} \infty$, almost surely (from the Strong Law of Large Numbers).}

Now, we propose our model-free ``generalised minimax Q-learning'' by modifying the Step 3 of Algorithm \ref{alg:minimax Q-learning} as:
\begin{align}
    d_{n+1}(i,u,v) = w_n\Big(r(i,u,v)+\alpha\V [Q_n(Y_n(i,u,v))]\Big)+(1-w_n)\V[Q_n(i)],
\end{align}
where the sequence $\{w_n,~ n \geq 1\}$ is updated as:
\begin{align}\label{wnrecursion}
    w_{n+1} = (1-\gamma_n)w_n + \gamma_n \Big(\frac{1}{1 - \alpha \displaystyle \min_{i,u,v}p'_n(i|i,u,v)}  \Big),
\end{align}
with $w_0 \in [1,\frac{1}{1-\alpha}]$.
\subsection{Convergence Analysis:}
We write the two update equations as follows:
\begin{align}
\label{ttup1}
    Q_{n+1}(i,u,v) = Q_n(i,u,v) + \gamma_n (&h(Q_n,w_n)+ M_{n+1}), ~ \forall i,u,v, \\
\label{ttup2}
    w_{n+1} = w_n + \gamma_n (g(w_n) + \epsilon_n).
\end{align}
The function $h$ is defined as: 
\begin{align*} 
h(Q_n,w_n)(i,u,v) &= H_{w_n}(Q_n)(i,u,v) - Q_n(i,u,v) \\ &= \E\Big[w_n\Big(r(i,u,v)+\alpha\V [Q_n(j)]\Big)
+(1-w_n)\V[Q_n(i)] - Q_n(i,u,v)\Big].
\end{align*}
The sequence $\{M_n\}$ defined as:
\begin{align*}
    M_{n+1} = w_n\Big(r(i,u,v)+\alpha\V [Q_n(Y_n(i,u,v))]\Big)
+(1-w_n)\V[Q_n(i)] - Q_n(i,u,v) - h(Q_n,w_n),
\end{align*}
is a martingale difference noise sequence with respect to the increasing $\sigma-$fields \\
$\F_n := \{Q_0,w_0,M_0,\ldots,w_n,M_n\}$, $ n \geq 0,$ satisfying
\begin{align}\label{cond1}
    \E[\|M_{n+1}\|^2|\F_n] &\leq 3w_n^2\Big(\displaystyle \max_{i,u,v}|r(i,u,v)| \Big)^2 + \frac{6\alpha^2}{(1-\alpha)^2}\|Q_n\|^2 \nonumber \\ 
                            &\leq K \big(1+\|w_n\|^2+\|Q_n\|^2\big),
\end{align}
where $K = \max \Big\{3\Big(\displaystyle \max_{i,u,v}|r(i,u,v)| \Big)^2,\frac{6\alpha^2}{(1-\alpha)^2} \Big\}$.
The function $g$ is defined as:
\begin{align*}
    g(w_n) = \displaystyle \frac{1}{1 - \alpha \displaystyle \min_{i,u,v}p(i|i,u,v)} - w_n.
\end{align*}
Finally, $\epsilon_n = 
    \frac{1}{1 - \alpha \displaystyle \min_{i,u,v}p'_n(i|i,u,v)} - \frac{1}{1 - \alpha \displaystyle \min_{i,u,v}p(i|i,u,v)},$
where $p'_n$ is updated as shown in the equation \eqref{transprob}. Note that, from \eqref{transprobconv}, we get 
\begin{align}\label{cond2}
    \epsilon_n \xrightarrow[]{} 0, ~ \text{as} ~ n \xrightarrow[]{} \infty, ~ \text{almost surely. }
\end{align}
Notice from \eqref{ttup1}-\eqref{ttup2} that the $Q_n$-recursion in \eqref{ttup1} depends on the $w_n$-update in \eqref{ttup2}, while the latter is an independent update that does not depend on $Q_n$. Let $Q_{w^*}^{\dagger}$ be the (unique) fixed point of $H_{w^*}$. Note that, from \eqref{wnrecursion}, $w_n \in \Big[1,\displaystyle \frac{1}{1-\alpha}\Big], ~ \forall n \geq 0$. \textcolor{black}{Therefore, $\{w_n, ~ \forall n \geq 1\}$ updates are bounded.} We now make an assumption on the boundedness of $\{Q_n\}$ iterates. 
\begin{assumption}\label{assump1}
$\|Q_n\| \leq B < \infty,~ \forall n \geq 0$.
\end{assumption}
In practice, the iterates $\{Q_n\}$ will satisfy Assumption \ref{assump1} if they are projected to a prescribed compact set $\Omega$ whenever they exit it, see for instance, \cite[Chapter 5]{kushner2012stochastic} for a general setting of projected stochastic approximation. From Lemma \ref{relationlemma}, the solution $\|Q^{\dagger}_{w^*}\| \leq \frac{R}{1-\alpha}$. Therefore, we can choose the set $\Omega$ such that $\|\Omega\| := \displaystyle \max_{x \in \Omega} \|x\| > \frac{R}{1-\alpha}$.
\begin{lemma}\label{lipschitzP}
Functions $h(Q,w)$ and $g(w)$ are Lipschitz.
\end{lemma}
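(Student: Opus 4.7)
The plan is to handle the two functions separately. The map $g$ is immediate: its definition gives $g(w) = C - w$ with $C = 1/(1-\alpha\min_{i,u,v}p(i|i,u,v))$ independent of $w$, so $|g(w_1)-g(w_2)| = |w_1-w_2|$ on all of $\R$, hence $g$ is $1$-Lipschitz.

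For $h(Q,w)(i,u,v) = H_w(Q)(i,u,v) - Q(i,u,v)$, since the identity in $Q$ is trivially $1$-Lipschitz and independent of $w$, it suffices to prove that $(Q,w) \mapsto H_w(Q)$ is jointly Lipschitz on the relevant product set $\{\|Q\|\leq B\} \times [1, 1/(1-\alpha)]$. The $w$-range here comes from inspection of the recursion \eqref{wnrecursion} together with $p'_n(i|i,u,v) \in [0,1]$, and the $Q$-bound is exactly Assumption \ref{assump1}. I would split the increment via the triangle inequality with an intermediate point,
\begin{align*}
\|H_{w_1}(Q_1) - H_{w_2}(Q_2)\| \leq \|H_{w_1}(Q_1) - H_{w_2}(Q_1)\| + \|H_{w_2}(Q_1) - H_{w_2}(Q_2)\|,
\end{align*}
and bound each summand separately.

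For the first summand (varying only $w$), the definition of $H_w$ gives the clean telescoping
\begin{align*}
H_{w_1}(Q_1)(i,u,v) - H_{w_2}(Q_1)(i,u,v) = (w_1 - w_2)\Bigl[r(i,u,v) + \alpha \sum_j p(j|i,u,v)\V[Q_1(j)] - \V[Q_1(i)]\Bigr],
\end{align*}
whose bracket I bound uniformly using $|r|\leq R$, Corollary \ref{l2} to replace each $|\V[Q_1(\cdot)]|$ by $\|Q_1\|$, and Assumption \ref{assump1} to bound $\|Q_1\| \leq B$; this gives a $w$-Lipschitz constant of order $R + (1+\alpha)B$. For the second summand (varying only $Q$), applying Lemma \ref{l1} to each of the two $\V[\cdot]$ terms appearing in $H_{w_2}$ yields a coefficient $|w_2|\alpha + |1-w_2|$, which is uniformly bounded by a constant $L = L(\alpha)$ since $w_2$ lies in the compact interval $[1, 1/(1-\alpha)]$. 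Adding the two estimates and re-including the $-Q$ term (which contributes at most an additional $\|Q_1 - Q_2\|$) delivers the joint Lipschitz property of $h$.

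I do not foresee a serious technical obstacle; the one subtle point is that the $w$-direction estimate is not global on $\R^{|S|\times|U|\times|V|} \times \R$ but only on bounded sets, which is precisely why Assumption \ref{assump1} (together with the a priori range of $w_n$) is invoked. Without an a priori bound on $\|Q\|$ the bracket above could blow up, so the argument would only yield local Lipschitzness.
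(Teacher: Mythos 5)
Your proposal is correct and follows essentially the same route as the paper: both split the increment of $h$ via a triangle inequality with an intermediate point (you freeze $Q_1$ and vary $w$ first, the paper freezes $w_1$ and varies $Q$ first, which is immaterial), bound the $Q$-direction using the contraction-type coefficient $|1-w|+w\alpha$ over $w\in[1,\tfrac{1}{1-\alpha}]$ plus the identity term, bound the $w$-direction using $|r|\leq R$, Corollary~\ref{l2} and Assumption~\ref{assump1}, and observe $g$ is $1$-Lipschitz. Your remark that the bound is only local in $Q$ (hence the need for Assumption~\ref{assump1}) matches the paper's stated motivation for that assumption.
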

\begin{proof}
Consider $p,q \in [-B,B]^{|S|\times |U| \times |V|}$ and $w_1,w_2 \in [1,\frac{1}{1-\alpha}]$. Let $R = \displaystyle \max_{i,u,v}|r(i,u,v)|$. Then,
\begin{align*}
    \|h(p,w_1) - h(q,w_1)\| &\leq  \|H_{w_1}(p) - H_{w_1}(q)\| + \|p-q\| \leq (|1-w_1|+w_1\alpha)\|p-q\| + \|p-q\| \\ & \leq \left(\frac{1+\alpha}{1-\alpha}\right)\|p-q\|. \\
    |(h(q,w_1) - h(q,w_2))(i,u,v)| &\leq |w_1 - w_2| \E[|r(i,u,v)+ \alpha \V[q(j)] - \V[q(i)|]
     \leq |w_1 - w_2|(R + 2B).
\end{align*}
Hence, $\|h(p,w_1) - h(q,w_2)\| \leq L(\|p-q\| + |w_1 -w_2|)$,
where $L= \displaystyle \max\left\{\frac{1+\alpha}{1-\alpha},R + 2B\right\}$. 
Finally, $|g(w_1) - g(w_2)| \leq |w_1 - w_2|$.
Therefore the functions $h(Q,w)$ and $g(w)$ are Lipschitz. 
\end{proof}
We now consider the iterates \eqref{ttup1}-\eqref{ttup2} in a combined form as follows:
\begin{align}\label{combiter}
    x_{n+1} = x_n + \gamma_n(f(x_n)+M'_{n+1}+ \epsilon'_n), ~ \text{where} 
\end{align}
$x_n =\left(Q_{n},w_{n}\right)^{T}
$, 
$f(x_n) = \left(H_{w_n}(Q_n) - Q_n, w^* - w_n\right)^{T}
$, 
$M'_{n+1} =\left(M_{n+1}, 0 \right)^{T},
$
$\epsilon'_n = \left(0,\epsilon_n\right)^{T}
$.
\textcolor{black}{Let $Q^\dagger_{w^*}$ be the fixed point of the modified min-max Q-Bellman operator (see \eqref{MQU}) when $w = w^*$ is used.}
\begin{theorem}
\label{twol1}
$x_n \xrightarrow[]{} x^*$, where $x^* = \left(Q^{\dagger}_{w^*},w^* \right)^{T}
$, almost surely. 
\end{theorem}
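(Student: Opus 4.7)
The plan is to apply the ODE-based stochastic approximation framework of \cite{borkar2009stochastic} to the combined recursion \eqref{combiter} and then analyse the limiting ODE, which has a cascade structure that decouples its two components.

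First I would verify the standard hypotheses. The driver $f$ is globally Lipschitz on the relevant domain: its first block is $h(Q,w)$ and its second block is the affine map $w \mapsto w^{*} - w$, so Lemma \ref{lipschitzP} gives the required regularity. The Robbins--Monro conditions on $\{\gamma_n\}$ hold by hypothesis. The sequence $\{M'_{n+1}\}$ is a martingale difference with respect to $\F_n$, and combining \eqref{cond1} with Assumption \ref{assump1} and the built-in bound $w_n \in [1,\tfrac{1}{1-\alpha}]$ (immediate from the convex combination form of \eqref{wnrecursion}) yields $\E[\|M'_{n+1}\|^{2}\mid \F_{n}] \leq K'(1+\|x_n\|^{2})$ together with boundedness of the iterates. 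Finally, $\epsilon'_n$ is an adapted additive perturbation that vanishes almost surely by \eqref{cond2}. These ingredients place the recursion in the standard setting of stochastic approximation with asymptotically vanishing perturbation, so the asymptotic behaviour of $\{x_n\}$ is governed by the internally chain transitive invariant sets of the ODE
\begin{equation*}
\dot{w}(t) = w^{*} - w(t), \qquad \dot{Q}(t) = H_{w(t)}(Q(t)) - Q(t).
\end{equation*}

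Next I would identify $x^{*}$ as the unique globally asymptotically stable equilibrium of this ODE. The $w$-component is autonomous and $w(t) - w^{*} = (w(0)-w^{*})e^{-t}$, so $w(t) \to w^{*}$ exponentially. The $Q$-dynamics are then asymptotically autonomous with limit $\dot{Q} = H_{w^{*}}(Q) - Q$; by Lemma \ref{l5}, $H_{w^{*}}$ is a max-norm contraction with unique fixed point $Q^{\dagger}_{w^{*}}$, so a direct max-norm calculation shows $\|Q(t) - Q^{\dagger}_{w^{*}}\|$ is strictly decreasing along the autonomous flow away from $Q^{\dagger}_{w^{*}}$. To promote this to the coupled system I would use the Lipschitz dependence of $H_{w}$ on $w$ from Lemma \ref{lipschitzP} together with a Gronwall estimate: since $\|H_{w(t)}(Q) - H_{w^{*}}(Q)\|$ decays exponentially in $t$ uniformly on bounded sets, so does the deviation of $Q(t)$ from the autonomous trajectory, and therefore $Q(t) \to Q^{\dagger}_{w^{*}}$.

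The main obstacle I anticipate is precisely this last transfer step: ruling out spurious invariant sets created by the time-varying operator $H_{w(t)}$ in the coupled ODE. A cleaner alternative, which I would fall back on if the Gronwall argument is delicate, exploits the fact that \eqref{ttup2} is independent of $Q_n$ and proceeds in two stages: first apply Theorem \ref{stoc-fp} (in scalar form) to \eqref{ttup2} with $\epsilon_n$ absorbed as an asymptotically vanishing perturbation, obtaining $w_n \to w^{*}$ almost surely, and then, on this almost-sure event, view \eqref{ttup1} as Theorem \ref{stoc-fp} applied to the contraction $H_{w^{*}}$ plus an asymptotically vanishing drift $\|H_{w_n}(Q_n) - H_{w^{*}}(Q_n)\| \to 0$, which gives $Q_n \to Q^{\dagger}_{w^{*}}$ and hence $x_n \to x^{*}$ almost surely.
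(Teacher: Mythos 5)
Your proposal follows the same overall route as the paper: both cast the combined recursion \eqref{combiter} in the ODE-based stochastic approximation framework of \cite{borkar2009stochastic}, verify the Lipschitz condition via Lemma \ref{lipschitzP}, the martingale-noise bound via \eqref{cond1} together with Assumption \ref{assump1}, and the vanishing perturbation via \eqref{cond2}, and then conclude by identifying the limiting ODE $\dot{w} = w^{*}-w$, $\dot{Q} = H_{w}(Q)-Q$. Where you genuinely add something is in the analysis of that ODE. The paper's proof stops at the assertion that $x^{*}$ is the unique equilibrium point ``and hence the invariant set is the singleton $\{x^{*}\}$''; this inference is not automatic, since uniqueness of an equilibrium does not by itself rule out other internally chain transitive invariant sets, and global asymptotic stability must actually be established. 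Your cascade argument supplies exactly the missing step: the $w$-component is autonomous and converges exponentially, the $Q$-dynamics are then asymptotically autonomous with limit driven by the max-norm contraction $H_{w^{*}}$ (Lemma \ref{l5}), and a Gronwall estimate transfers convergence to the coupled flow. Your fallback alternative --- first proving $w_n \to w^{*}$ from the decoupled recursion \eqref{ttup2} and then treating \eqref{ttup1} as a contraction iteration with asymptotically vanishing drift $\|H_{w_n}(Q_n)-H_{w^{*}}(Q_n)\|\to 0$ --- is arguably cleaner still and avoids the ODE machinery for the second coordinate entirely. In short, your proof is correct and, on the one point where the paper is terse to the point of leaving a gap, yours is the more complete argument; the only caveat is that the Gronwall/asymptotically-autonomous step should be written out with the uniform-in-$Q$ Lipschitz constant in $w$ made explicit, which Lemma \ref{lipschitzP} provides on the compact set guaranteed by Assumption \ref{assump1}.
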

\begin{proof}
\textcolor{black}{The iterates $\{x_n\}$ in \eqref{combiter} track the ODE \cite[Section 2.2]{borkar2009stochastic} ~  
$\dot x = f(x) = \left(H_{w}(Q) - Q,w^* - w\right)^{T}
$.
Note that $w_n$ iterates drive the $Q_n$ iterates but the reverse is not true, i.e, it is a one way coupling of the dynamics. First, consider the ODE $\dot w = w^* - w$. Let $g_\infty(w) = \displaystyle \lim_{r \xrightarrow[]{} \infty} \frac{g(rw)}{r}$. The function $g_\infty(w)$ exists and is equal to $-w$. Moreover, the origin is the unique globally asymptotically stable equilibrium for the ODE
\begin{align*}
    \dot w = g_\infty(w) = -w,
\end{align*}
with $V(w) = \frac{w^2}{2}$ serving as an associated Lyapunov function.  Further, $w^*$ is the unique globally asymptotically stable equilibrium for the ODE $\dot w = w^* - w$. Therefore, by \cite[Theorem 7, Chapter 3 and Theorem 2 - Corollary 4]{borkar2009stochastic}, we have $w_n \xrightarrow[]{} w^*$ almost surely. 
The $Q_n$ iterates now track the ODE given by $\dot Q = H_{w^*}(Q) -Q$. By virtue of Lemma \ref{l5}, $H_w^*$ is a contraction. Hence, by Stochastic fixed point analysis \cite[Section 10.3]{borkar2009stochastic}, we have $Q_n \xrightarrow{} Q^\dagger_{w^*}$, almost surely.}
\end{proof}

\begin{remark}
One way to prove the Assumption \ref{assump1} is to project the $\{Q_n\}$ iterates onto a prespecified convex and compact set $C$. Under projection, the update equation (eq. \eqref{ttup1}), i.e.,
\begin{align}
\label{ttup1-sa}
    Q_{n+1}(i,u,v) = Q_n(i,u,v) + \gamma_n (&h(Q_n,w_n)+ M_{n+1}), ~ \forall i,u,v,
\end{align}
is replaced with
\begin{align}
\label{ttup1-proj}
    Q_{n+1}(i,u,v) = \Gamma_{C} \Big\{ Q_n(i,u,v) + \gamma_n (&h(Q_n,w_n)+ M_{n+1})\Big\}, ~ \forall i,u,v,
\end{align}
where $\Gamma_{C}(P)$ is the projection of $P$ onto a compact and convex set such as $C = [-B,B]^{|S|\times |U| \times |V|}$. Convexity of $C$ would ensure that $\Gamma_{C}(P)$ is a unique fixed point for any $P$.

The iterates $\{Q_n\}$ in \eqref{ttup1-proj} track the ODE \cite[Chapter 5]{kushner2012stochastic} \begin{align}\label{projSA}
    \dot Q = \hat\Gamma (H_{w^*}(Q) -Q),
\end{align}
where the operator $\hat\Gamma (h),$ for a continuous function $h$ is defined as:
\begin{align}
    \hat\Gamma (h(Q,w)) = \displaystyle \lim_{0 <\Delta \xrightarrow[]{} 0}\frac{[\Gamma_{C}(Q+\Delta h(Q,w)) - Q]}{\Delta},
\end{align}
From \cite[Theorem 2, Chapter 2]{borkar2009stochastic}, $\{Q_n\}$ iterates converge to a compact, connected, internally chain transitive, invariant set of the ODE \eqref{projSA}. It is easy to see that $\{Q^\dagger_{w^*}\}$ is an invariant and internally chain transitive (ICT) set of the ODE \eqref{projSA}. However, the projection operation will introduce spurious fixed points on the boundary of the set $C$ that will also be invariant and ICT sets of the ODE \eqref{projSA}. In \cite[Chapter 5.4]{borkar2009stochastic}, some practical techniques are discussed to avoid convergence to undesired equilibrium points (boundary points in this case). 
\end{remark}

\section{Experiments and Results}
We refer to the Algorithm \ref{alg:minimax Q-learning}, with $w = w^*$, as ``Generalised optimal minimax Q-learning'' and the model-free algorithm derived in the previous section as ``Generalised minimax Q-learning" algorithm in the experiments. We generate a two-player zero-sum Markov game and run all the algorithms for $50$ independent episodes in each of the three cases - (a). $10$ states and $5$ actions for each of the agents, (b). $20$ states and $5$ actions for each of the agents and (c). $50$ states and $5$ actions for each of the agents. The discount factor is set to $0.6$. The probability transition matrix generated satisfies $p(i|i,u,v) >0 ~ \forall i,u,v$ as this condition is required for faster performance of the generalized optimal minimax Q-learning and generalized minimax Q-learning. All the algorithms are run for $1000$ iterations in each episode with the same step-size sequences. 
\begin{table}[h!]
\begin{center}
\begin{tabular}{|c|c|c|c|}
\hline
\textbf{Algorithm}                                                                        & \textbf{10 states}                                                 & \textbf{20 states}                                                 & \textbf{50 states}                                                 \\ \hline
\textbf{\begin{tabular}[c]{@{}c@{}}Standard minimax\\ Q-learning\end{tabular}}            & \begin{tabular}[c]{@{}c@{}}0.68 $\pm$\\ 0.07\end{tabular}          & \begin{tabular}[c]{@{}c@{}}1.67 $\pm$\\ 0.13\end{tabular}          & \begin{tabular}[c]{@{}c@{}}3.99 $\pm$\\ 0.11\end{tabular}          \\ \hline
\textbf{\begin{tabular}[c]{@{}c@{}}Generalized minimax\\ Q-learning\end{tabular}}         & \textbf{\begin{tabular}[c]{@{}c@{}}0.49 $\pm$\\ 0.08\end{tabular}} & \textbf{\begin{tabular}[c]{@{}c@{}}1.43 $\pm$\\ 0.18\end{tabular}} & \textbf{\begin{tabular}[c]{@{}c@{}}3.75 $\pm$\\ 0.12\end{tabular}} \\ \hline
\textbf{\begin{tabular}[c]{@{}c@{}}Generalized optimal\\ minimax Q-learning\end{tabular}} & \begin{tabular}[c]{@{}c@{}}0.35 $\pm$\\ 0.08\end{tabular}          & \begin{tabular}[c]{@{}c@{}}1.26 $\pm$\\ 0.19\end{tabular}          & \begin{tabular}[c]{@{}c@{}}3.59 $\pm$\\ 0.14\end{tabular}          \\ \hline
\end{tabular}
\end{center}
\caption{Comparison of Error among three algorithms averaged across $50$ episodes}
\label{tab-exp}
\end{table}

The comparison criterion considered is the average error that is calculated as follows. At the end of each episode of the algorithm, the norm difference between estimate of the min-max value function and the actual min-max value function is computed. This process is repeated for all the $50$ episodes and the average is computed. Thus,
\begin{align}
    \text{Average Error} = \frac{1}{50}\displaystyle\sum_{k = 1}^{50} \|J^{*} - \V[Q_{k}(.)] \|_{2},
\end{align}
where $J^*$ is the min-max value function of the game and $Q_k(.)$ is the minimax Q-value function estimate obtained at the end of $k^{th}$ episode. 

In Table \ref{tab-exp}, we report the average error of three algorithms. We can see that, generalized optimal minimax Q-learning has the least average error, followed by the generalized minimax Q-leaning algorithm. This is expected as the generalized optimal Q-learning algorithm makes use of the optimal relaxation parameter $w^*$ in its updates, which is not practically feasible. 
\textcolor{black}{Therefore, we conclude that our proposed generalized minimax Q-learning algorithms perform empirically better (in terms of number of samples) than the standard minimax Q-learning algorithm.}  


\section{Conclusions}
In this work, we use the technique of successive relaxation to propose a modified min-max Bellman operator for two-player zero-sum games. We prove that the contraction factor of this modified min-max Bellman operator is less than the discount factor (contraction of the standard min-max Bellman operator) for the choice of $w>1$. The construction of the modified Q-Bellman operator enabled us to develop a generalized minimax Q-learning algorithm. We show the almost sure convergence of our proposed algorithm. We then derive a relation between our proposed algorithm and the standard minimax Q-learning algorithm. We also propose a model-free (from samples) version of our algorithm and prove its convergence under the boundedness of iterates assumption. In the future, we would like to incorporate function approximation architecture and apply our proposed algorithm on practical applications. \textcolor{black}{Moreover, as a future work, we would like to explore the theoretical sample complexity of our algorithm and compare the same with minimax Q-learning.}

\section{Acknowledgements}
Raghuram Bharadwaj was supported by a fellowship grant from the Centre for Networked Intelligence (a Cisco CSR initiative) of the Indian Institute of Science, Bangalore. Shalabh Bhatnagar was supported by the J.C.Bose Fellowship, a project from DST under the ICPS Program and the RBCCPS, IISc.


\bibliographystyle{IEEEtran}
\bibliography{references}
\end{document}